\documentclass[acmsmall]{acmart}

\AtBeginDocument{%
  \providecommand\BibTeX{{%
    \normalfont B\kern-0.5em{\scshape i\kern-0.25em b}\kern-0.8em\TeX}}}





\usepackage{url}            
\usepackage{booktabs}       
\usepackage{amsfonts}       
\usepackage{nicefrac}       
\usepackage{microtype}      
\usepackage{amssymb}
\usepackage{algorithm}
\usepackage{caption}
\usepackage{amsmath}
\usepackage{amsthm}
\usepackage{graphicx}
\usepackage{subfigure}
\usepackage{tabularx}
\usepackage{pifont}
\usepackage[noend]{algpseudocode}
\usepackage{bm}
\usepackage{array}
\usepackage{balance}
\usepackage{amsthm}
\usepackage{amsmath}
\usepackage{amssymb}
\usepackage{multirow}
\usepackage{fullpage}
\usepackage{color}

\allowdisplaybreaks

\newcommand{\Tr}{^{\rm \top}}


\newcommand{\Rcal}{{\mathcal{R}}}
\newcommand{\Acal}{{\mathcal{A}}}

\newcommand{\Fcal}{{\mathcal{F}}}
\renewcommand{\a}{{\bf a}}


\newcommand{\g}{{\bf g}}


\renewcommand{\u}{{\bf u}}
\renewcommand{\v}{{\bf v}}

\newcommand{\x}{{\bf x}}
\newcommand{\y}{{\bf y}}
\newcommand{\z}{{\bf z}}

\newcommand{\A}{{\bf A}}

\newcommand{\Lcal}{{\mathcal{L}}}


\renewcommand{\S}{{\bf S}}

\newcommand{\Ocal}[1]{{\mathcal{O}\left( #1  \right)}}
\newcommand{\Omegacal}[1]{{\Omega \left( #1 \right)}}

\newcommand{\Xcal}{{\mathcal{X}}}




\newcommand{\0}{{\bf 0}}

\newcommand{\argmin}{\operatornamewithlimits{argmin}}

\newcommand{\lrincir}[1]{\left( #1 \right)}

\newcommand{\lrnorm}[1]{\left\lVert#1\right\rVert}
\newcommand{\lrangle}[1]{\left\langle#1 \right\rangle}

\newcommand{\EE}{\mathop{\mathbb{E}}}
\newcommand{\RR}{\mathbb{R}}

\newcommand{\refabovecir}[2]{\displaystyle_{#1}^{#2}}

\newtheorem{Definition}{\bf{Definition}}
\newtheorem{Theorem}{\bf{Theorem}}
\newtheorem{Lemma}{\bf{Lemma}}

\newtheorem{Assumption}{\bf{Assumption}}

\newtheorem{Remark}{\bf{Remark}}

\begin{document}

\title{Understand Dynamic Regret with Switching Cost for Online Decision Making}

\author{Yawei Zhao}
\email{zhaoyawei@nudt.edu.cn}
\affiliation{%
  \institution{School of Computer, National University of Defense Technology}
  \streetaddress{109 Deya Road}
  \city{Changsha}
  \state{Hunan}
  \postcode{410073}
  \country{China}
}

\author{Qian Zhao}
\email{mcsqian.zhao@gmail.com}
\affiliation{%
  \institution{College of Mathematics and System Science, Xinjiang University}
  \city{Urumqi}
  \state{Xinjiang}
  \postcode{830001}
  \country{China}
}

\author{Xingxing Zhang}
\email{zhangxing@bjtu.edu.cn}
\affiliation{%
  \institution{Institute of Information Science, and Beijing Key Laboratory of Advanced Information Science and Network Technology, Beijing Jiaotong University}
  \city{Beijing}
  \postcode{100044}
  \country{China}
}

\author{En Zhu}
\email{enzhu@nudt.edy.cn}
\authornote{represents corresponding author.}
\author{Xinwang Liu}
\email{xinwangliu@nudt.edu.cn}
\affiliation{%
  \institution{School of Computer, National University of Defense Technology}
  \streetaddress{109 Deya Road}
  \city{Changsha}
  \state{Hunan}
  \postcode{410073}
  \country{China}
}

\author{Jianping Yin}
\email{jpyin@dgut.edu.cn}
\affiliation{%
  \institution{School of Computer, Dongguan University of Technology}
  \city{Dongguan}
  \state{Guangdong}
  \postcode{523808}
  \country{China}
}


\begin{abstract}
As a metric to measure the performance of an online method, dynamic regret with switching cost has drawn much attention for online decision making problems.  Although the sublinear regret has been provided in many previous researches, we still have little knowledge about the relation between the \textit{dynamic regret} and the \textit{switching cost}. In the paper, we investigate the relation for two classic online settings: Online Algorithms (OA) and Online Convex Optimization (OCO). We provide a new theoretical analysis framework, which shows an interesting observation, that is, the relation between the switching cost and the dynamic regret is different for settings of OA and OCO. Specifically, the switching cost has significant impact on the dynamic regret in the setting of OA.  But, it does not have an impact on the dynamic regret in the setting of OCO. Furthermore, we provide a lower bound of regret for the setting of OCO, which is same with the lower bound in the case of no switching cost. It shows that the switching cost does not change the difficulty of online decision making problems in the setting of OCO.
\end{abstract}

%

\keywords{Online decision making, dynamic regret, switching cost, online algorithms, online convex optimization, online mirror descent.}

\maketitle

\section{Introduction}
\label{sect_introduction}

Online Algorithms (OA)\footnote{Some literatures denote OA by `smoothed online convex optimization'.} \cite{pmlr-v75-chen18b,Chen:2016:UPO,renault-101007} and Online Convex Optimization (OCO) \cite{introduction-online-optimization,Hazan2016Introduction,ShalevShwartz:2012dz} are two important settings of online decision making. Methods in both OA and OCO settings are designed to  make a decision at every round, and then use the decision as a response to the environment. Their major difference is outlined as follows.
\begin{itemize}
\item For every round, methods in the setting of OA are able to know a loss function first, and then play a decision as the response to the environment.
\item However, for every round, methods in the setting of OCO have to play a decision before knowing the loss function. Thus, the environment may be adversarial to decisions of those methods.
\end{itemize}
Both of them have a large number of practical scenarios. For example, both the $k$-server problem \cite{Lee:2018wt,Bansal:2010:MTS} and the Metrical Task Systems (MTS) problem \cite{Abernethy:2010,Bubeck:2019vp,Bansal:2010:MTS}  are usually studied in the setting of OA. Other problems include online learning \cite{7401075,Yang:2013:EOL,8610120,8260919},  online recommendation \cite{Wang:2016:IAR}, online classification \cite{NIPS2003_2385,NIPS2010_3896}, online portfolio selection \cite{Li:2013:CWM}, and model predictive control \cite{MORARI1999667} are usually studied in the setting of OCO. 


Many recent researches begin to investigate performance of online methods in both OA and OCO settings by using \textit{dynamic regret with switching cost} \cite{pmlr-v75-chen18b,Li:2018uy}. It measures the difference between the cost yielded by real-time decisions and the cost yielded by the optimal decisions.   Comparing with the classic static regret \cite{introduction-online-optimization}, it has two major differences.
\begin{itemize}
\item First, it allows optimal decisions to change within a threshold over time, which is necessary in the dynamic environment\footnote{Generally, the dynamic environment means the distribution of the data stream may change over time.}.
\item Second, the cost yielded by a decision consists of two parts: the \textit{operating cost} and the \textit{switching cost}, while the classic static regret only contains the operating cost.
\end{itemize} The switching cost measures the difference between two successive decisions, which is needed in many practical scenarios such as service management in electric power network \cite{2b1edbffc}, dynamic resource management in data centers \cite{5934885,6269026,Wang:2014:ESG:2567529.2567556}. However, we still have little knowledge about the relation between the dynamic regret and the switching cost.  In the paper, we are motivated by the following fundamental questions.
\begin{itemize}
\item \textit{Does the switching cost impact the dynamic regret of an online method?}
\item \textit{Does the problem of online decision making  become more difficult due to the switching cost?}
\end{itemize}

To answer those challenging questions, we investigate online mirror descent in settings of OA and OCO, and provide a new theoretical analysis framework. According to our analysis, we find an interesting observation, that is, \textit{the switching cost does impact on the dynamic regret in the setting of OA. But, it has no impact on the dynamic regret in the setting of OCO.}  Specifically, when the switching cost is measured by $\lrnorm{\x_{t+1} - \x_t}^\sigma$ with $1\le \sigma \le 2$,  the dynamic regret for an OA method is $\Ocal{T^{\frac{1}{\sigma + 1}} D^{\frac{\sigma}{\sigma+1}}}$ where $T$ is the maximal number of rounds, and $D$ is the given budget of dynamics. But, the dynamic regret for an OCO method is $\Ocal{\sqrt{TD} + \sqrt{T}}$, which is same with the case of no switching cost \cite{Gyorgy:2016,Zhao:2018wx,Zinkevich:2003,Hall:2013vr}. Furthermore, we provide a lower bound of dynamic regret, namely $\Omegacal{\sqrt{TD} + \sqrt{T}}$ for the OCO setting. Since the lower bound is still same with the case of no switching cost \cite{Zhao:2018wx}, it implies that \textit{the switching cost does not change the difficulty of the online decision making problem for the OCO setting.} Comparing with previous results, our new analysis is more general than previous results. We define a new dynamic regret with a generalized switching cost, and provide new regret bounds. It is novel to analyze and provide the tight regret bound in the dynamic environment, since previous analysis cannot work directly for the generalized dynamic regert.  In a nutshell, our main contributions are summarized as follows.
\begin{itemize}
\item We propose a new general formulation of the dynamic regret with switching cost, and then develop a new analysis framework based on it.
\item We provide $\Ocal{T^{\frac{1}{\sigma + 1}} D^{\frac{\sigma}{\sigma+1}}}$ regret with $1\le \sigma\le 2$ for the setting of OA and $\Ocal{\sqrt{TD} + \sqrt{T}}$ regret for the setting of OCO by using the online mirror descent.
\item We provide a lower bound $\Omegacal{\sqrt{TD} + \sqrt{T}}$ regret for the setting of OCO, which matches with the upper bound.
\end{itemize}

The paper is organized as follows. Section \ref{sect_related_work} reviews related literatures. Section \ref{sect_preliminary} presents the preliminaries. Section \ref{sect_dynamic_regret_switching_cost_our_formulation} presents our new formulation of the dynamic regret with switching cost. Section \ref{sect_theoretical_analysis} presents a new analysis framework and main results. Section \ref{sect_empirical_study} presents extensive empirical studies. Section \ref{sect_conclusion} conludes the paper, and presents the future work.

\section{Related work}
\label{sect_related_work}
In the section, we review related literatures briefly.
\subsection{Competitive ratio and regret}
Although the competitive ratio is usually used to analyze OA methods, and the regret is used to analyze OCO methods, recent researches aim to developing unified frameworks to analyze the performance of an online method in both settings \cite{Blum2000,Abernethy:2010,Antoniadis-10.1007,pmlr-v23-buchbinder12,Bubeck:2018:KVM,Andrew:2013:TTM,Chen:2015:OCO}. \cite{Blum2000} provides an analysis framework, which is able to achieve sublinear regret for OA methods and constant competitive ratio for OCO methods. \cite{Abernethy:2010,pmlr-v23-buchbinder12,Bubeck:2018:KVM} uses a general OCO method, namely online mirror descent in the OA setting, and improves the existing competitive ratio analysis for $k$-server and MTS problems. Different from them, we extend the existing regret analysis framework to handle a general switching cost, and focus on investigating the relation between regret and switching cost. \cite{Antoniadis-10.1007} provides a lower bound for the OCO problem in the competitive ratio analysis framework, but we provide the lower bound in the regret analysis framework. \cite{Andrew:2013:TTM,Chen:2015:OCO} study the regret with switching cost in the OA setting, but the relation between them is not studied. Comparing with  \cite{Andrew:2013:TTM,Chen:2015:OCO}, we extend their analysis, and present a more generalized bound of dynamic regret (see Theorem \ref{theorem_regret_oa_upper_bound}).

\subsection{Dynamic regret and switching cost}

Regret is widely used as a metric to measure the performance of OCO methods. When the environment is static, e.g., the distribution of data stream does not change over time, online mirror descent yields $\Ocal{\sqrt{T}}$ regret for convex functions and  $\Ocal{\log T}$ regret for strongly convex functions \cite{introduction-online-optimization,Hazan2016Introduction,ShalevShwartz:2012dz}. When the distribution of data stream changes over time, online mirror descent yields $\Ocal{\sqrt{TD}+\sqrt{T}}$ regret for convex functions \cite{Gyorgy:2016}, where $D$ is the given budget of dynamics. Additionally, \cite{Zinkevich:2003} first investigates online gradient descent in the dynamic environment, and obtains $\Ocal{\sqrt{TD}+\sqrt{T}}$ regret (by setting $\eta \propto \sqrt{\frac{D}{T}}$) for convex $f_t$. Note that the dynamic regret used in \cite{Zinkevich:2003} does not contain swtiching cost. \cite{Hall:2013vr,Hall:2015ct} use similar but more general definitions of dynamic regret, and still achieves $\Ocal{\sqrt{TD}+\sqrt{T}}$ regret. Furthermore, \cite{Zhao:2018wx} presents that  the lower bound of the dynamic regret is $\Omegacal{\sqrt{TD}+\sqrt{T}}$.  Many other previous researches investigate the regret under different definitions of dynamics such as parameter variation \cite{Mokhtari:2016jz,Yang:2016ud,pmlr-v84-gao18a,Zhang:2016wl}, functional variation \cite{pmlr-v48-jenatton16,Besbes:2015gb,Zhang:2018tu}, gradient variation \cite{Chiang2012Online}, and the mixed regularity \cite{Jadbabaie:2015wg,Chen:2017tt}. Note that the dynamic regret in those previous studies does not contain switching cost, which is significantly different from our work. Our new analysis shows that this bound is achieved and optimal when there is switching cost in the regret (see Theorems \ref{theorem_regret_oco_upper_bound} and \ref{theorem_lower_bound_oco}). The proposed analysis framework thus shows how the switching cost impacts the dynamic regret for settings of OA and OCO, which leads to new insights to understand online decision making problems.

\section{Preliminaries}
\label{sect_preliminary}

\begin{table*}[!h]
\centering
\begin{tabular}{c|c|c|c|c}
\hline 
Algo. & Make decision first? & Observe $f_t$ first? & Metric & Has SC?\tabularnewline
\hline 
\hline 
OA & no & yes & competitive ratio & yes\tabularnewline
\hline 
OCO & yes & no & regret & no\tabularnewline
\hline 
\end{tabular}
\caption{Summary of difference between OA and OCO.  `SC' represents `switching cost'. }
\label{table_oa_oco_difference}
\end{table*}

In the section, we present the preliminaries of online algorithms and online convex optimization, and highlight their difference. Then, we present the dynamic regret with switching cost, which is used to measure the performance of both OA methods and OCO methods. 

\subsection{Online algorithms and online convex optimization}

Comparing with the setting of OCO \cite{ShalevShwartz:2012dz,Hazan2016Introduction,introduction-online-optimization},  OA has the following major difference. 
\begin{itemize}
\item OA assumes that the loss function, e.g., $f_t$, is known before making the decision at every round. But, OCO assumes that the loss function, e.g., $f_t$, is given after making the decision at every round.
\item The performance of an OA method is measured by using the \textit{competitive ratio} \cite{pmlr-v75-chen18b}, which is defined by
\begin{align}
\nonumber
\frac{ \left [\sum_{t=1}^T \lrincir{f_t(\x_t) +  \lrnorm{\x_t - \x_{t-1}}}\right ] }{ \left [\sum_{t=1}^T \lrincir{f_t(\x_t^\ast) +  \lrnorm{\x_t^\ast - \x_{t-1}^\ast}}\right ] }.
\end{align} Here, $\{\x_t^\ast\}_{t=1}^T$ is denoted by
\begin{align}
\nonumber
\{\x_t^\ast\}_{t=1}^T = \argmin_{\{\z_t\}_{t=1}^T \in \tilde{\Lcal}_D^T} \sum_{t=1}^T \lrincir{ f_t(\z_t) + \lrnorm{\z_t - \z_{t-1}} } 
\end{align} where $\tilde{\Lcal}_D^T := \left \{ \{\z_t\}_{t=1}^T : \sum_{t=1}^{T} \lrnorm{\z_t - \z_{t-1}} \le D   \right \}$. $D$ is the given budget of dynamics. It is the best offline strategy, which is yielded by knowing all the requests beforehand \cite{pmlr-v75-chen18b}. Note that $\lrnorm{\x_t^\ast - \x_{t-1}^\ast}$ is the switching cost yielded by $A$ at the $t$-th round. But, OCO is usually measured by the \textit{regret}, which is defined by
\begin{align}
\nonumber
\sum_{t=1}^T f_t(\x_t) - \min_{\{\z_t\}_{t=1}^T \in \Lcal_D^T}\sum_{t=1}^T f_t(\z_t),
\end{align} where $\Lcal_D^T := \left \{ \{\z_t\}_{t=1}^T : \sum_{t=1}^{T-1} \lrnorm{\z_{t+1} - \z_t} \le D   \right \}$. $D$ is also the given budget of dynamics. Note that the regret in classic OCO algorithm does not contain the switching cost. 
\end{itemize} To make it clear, we use Table \ref{table_oa_oco_difference} to highlight their differences.

\subsection{Dynamic regret with switching cost}

Although the analysis framework of OA and OCO is different, the \textit{dynamic regret with switching cost} is a popular metric to measure the performance of both OA and OCO \cite{pmlr-v75-chen18b,Li:2018uy}. Formally, for an algorithm $A$, its dynamic regret with switching cost $\widetilde{\Rcal}_D^A$ is defined by
\begin{align}
\label{equa_dynamic_regret_switching_cost_obd}
\widetilde{\Rcal}_D^A := & \sum_{t=1}^T f_t(\x_t) + \sum_{t=1}^{T-1}\lrnorm{\x_{t+1} - \x_t}  -  \min_{\{\z_t\}_{t=1}^T \in \Lcal_D^T}\lrincir{\sum_{t=1}^T f_t(\z_t) + \sum_{t=1}^{T-1}\lrnorm{\z_{t+1} - \z_t}},
\end{align} where $\Lcal_D^T := \left \{ \{\z_t\}_{t=1}^T : \sum_{t=1}^{T-1} \lrnorm{\z_{t+1} - \z_t} \le D   \right \}$. Here, $\lrnorm{\x_{t+1} - \x_t}$ represents the switching cost at the $t$-th round. $D$ is the given budget of dynamics in the dynamic environment. When $D=0$, all optimal decisions are same. With the increase of $D$, the optimal decisions are allowed to change to follow the dynamics in the environment. It is necessary when the distribution of data stream changes over time.

\subsection{Notations and Assumptions.} 

We use the following notations in the paper. 
\begin{itemize}
\item 
The bold lower-case letters, e.g., $\x$,  represent vectors.  The normal letters, e.g., $\mu$, represent a scalar number.    
\item $\lrnorm{\cdot}$ represents a general norm of a vector.  
\item $\Xcal^T$ represents Cartesian product, namely, $\underbrace{ \Xcal \times \Xcal \times ... \times \Xcal }_{T \text{ times}}$. $\Fcal^T$ has the similar meaning.
\item Bregman divergence $B_{\Phi}(\x,\y)$ is defined by $ B_\Phi(\x,\y) = \Phi(\x) - \Phi(\y) - \lrangle{\nabla \Phi(\y), \x - \y}$.  
\item $\Acal$ represents a set of all possible online methods, and $A\in \Acal$ represents some a specific online method. 
\item $\lesssim$ represents `less than equal up to a constant factor'.
\item $\EE$ represents the mathematical expectation operator.
\end{itemize}

Our assumptions are presented as follows. They are widely used in previous literatures \cite{Li:2018uy,pmlr-v75-chen18b,ShalevShwartz:2012dz,Hazan2016Introduction,introduction-online-optimization}.
\begin{Assumption}
\label{assumption_basic}
The following basic assumptions are used throughout the paper.
\begin{itemize}
\item For any $t\in[T]$, we assume that $f_t$ is convex, and has $L$-Lipschitz gradient.
\item The function $\Phi$ is $\mu$-strongly convex, that is, for any $\x\in\Xcal$ and $\y\in\Xcal$,  $ B_{\Phi}(\x,\y) \ge \frac{\mu}{2}\lrnorm{\x-\y}^2 $.
\item For any $\x\in\Xcal$ and $\y\in\Xcal$, there exists a positive constant $R$ such that  
\begin{align}
\nonumber
\max \left \{ B_\Phi(\x, \y), \lrnorm{\x - \y}^2 \right \}    \le R^2.
\end{align}
\item For any $\x\in\Xcal$, there exists a positive constant $G$ such that 
\begin{align}
\nonumber
\max \left \{ \lrnorm{\nabla f_t(\x)}^2, \lrnorm{\nabla \Phi(\x)}^2 \right \} \le G^2
\end{align}
\end{itemize}
 
\end{Assumption}

\section{Dynamic regret with generalized switching cost}
\label{sect_dynamic_regret_switching_cost_our_formulation}
In the section, we propose a new formulation of dynamic regret, which contains a generalized switching cost. Then, we highlight the novelty of this formulation, and present the online mirror decent method for setting of OA and OCO.

\subsection{Formulation}
\label{subsect_formulation}

For an algorithm $A\in \Acal$, it yields a cost at the end of every round, which consists of two parts: \textit{operating cost} and \textit{switching cost}. At the $t$-th round, the \textit{operating cost} is incurred by $f_t(\x_t)$, and the \textit{switching cost} is incurred by $\lrnorm{\x_{t+1} - \x_t}^{\sigma}$ with $1\le \sigma \le 2$. The optimal decisions are denoted by $\{\y_t^\ast\}_{t=1}^T$, which is denoted by
\begin{align}
\nonumber
\{\y_t^\ast\}_{t=1}^T = \argmin_{\{\y_t\}_{t=1}^T \in \Lcal_D^T } \sum_{t=1}^T f_t(\y_t) + \sum_{t=1}^{T-1}\lrnorm{\y_{t+1} - \y_t}^{\sigma}.
\end{align} Here, $\Lcal_D^T$ is denoted by
\begin{align}
\nonumber
\Lcal_D^T = \left\{ \{\y_t\}_{t=1}^T : \sum_{t=1}^{T-1}\lrnorm{\y_{t+1}-\y_t} \le D \right\}.
\end{align}
$D$ is a given budget of dynamics, which measures how much the optimal decision, i.e., $\y_t^{\ast}$ can change over $t$. With the increase of $D$, those optimal decisions can change over time to follow the dynamics in the environment effectively.

Denote an optimal method $A^\ast$, which yields the optimal sequence of decisions $\{\y_t^\ast\}_{t=1}^T$. Its total cost is denoted by 
\begin{align}
\nonumber
\text{cost}(A^{\ast}) = \sum_{t=1}^T f_t(\y_t^\ast) + \sum_{t=1}^{T-1} \lrnorm{\y_{t+1}^\ast - \y_t^\ast}^{\sigma}.
\end{align} Similarly, the total cost of an algorithm $A\in \Acal$ is denoted by 
\begin{align}
\nonumber
\text{cost}(A) = \sum_{t=1}^T f_t(\x_t) + \sum_{t=1}^{T-1}\lrnorm{\x_{t+1} - \x_t}^{\sigma}.
\end{align}

\begin{Definition}
For any algorithm $A\in\Acal$, its dynamic regret $\Rcal_D^A$ with switching cost  is defined by 
\begin{align}
\label{equa_our_definition_dynamic_regret_switching_cost}
\Rcal_D^A :=  \text{cost}(A) - \text{cost}(A^{\ast}).
\end{align}
\end{Definition} 
Our new formulation of the dynamic regret $\Rcal_D^A$ makes a balance between the operating cost and the switching cost, which is different from the previous definition of the dynamic regret in \cite{Zinkevich:2003,Gyorgy:2016,Hall:2013vr}.

Note that the freedom of $\sigma$ with $1 \le \sigma \le 2$ allows our new dynamic regret $\Rcal_D^A$ to measure the performance of online methods for a large number of problems. Some problems such as dynamic control of data centers \cite{Lin:2012:OOS}, stock portfolio management \cite{Li:2014:OPS},  require to be sensitive to  the small change between successive decisions, and the switching cost in these problems is usually bounded by $\lrnorm{\x_{t+1} - \x_t}$.  But, many problems such as dynamic placement of cloud service \cite{6258025:zhang} need to bound the large change between successive decisions effectively, and the switching cost in these problems is usually bounded by $\lrnorm{\x_{t+1} - \x_t}^2$.

\subsection{Novelty of the new formulation}
Our new formulation of the dynamic regret is more general than previous formulations \cite{pmlr-v75-chen18b,Li:2018uy}, which are presented as follows. 
\begin{itemize}
\item \textbf{Support more general switching cost.} \cite{pmlr-v75-chen18b} defines the dynamic regret  with switching cost  by \eqref{equa_dynamic_regret_switching_cost_obd}. It is a special case of our new formulation \eqref{equa_our_definition_dynamic_regret_switching_cost} by setting $\sigma = 1$.  The sequence of optimal decisions $\{\y_t^\ast\}_{t=1}^T$ is dominated by $\{f_t\}_{t=1}^T$ and $D$, and does not change over $\{\x_t\}_{t=1}^T$. $\Rcal_D^A$ is thus impacted by $\{\x_t\}_{t=1}^T$ for the given $\{f_t\}_{t=1}^T$ and $D$. Generally, $\lrnorm{\x_{t+1} - \x_t}$ is more sensitive to measure the slight change between $\x_{t+1}$ and $\x_t$ than $\lrnorm{\x_{t+1} - \x_t}^2$.  But,  for some problems such as the dynamic placement of cloud service \cite{6258025:zhang}, the switching cost at the $t$-th round is usually measured by $\lrnorm{\x_{t+1} - \x_t}^2$, instead of $\lrnorm{\x_{t+1} - \x_t}$. The previous formulation in \cite{pmlr-v75-chen18b} is not suitable to bound the switching cost for those problems. Benefiting from $1\le \sigma \le 2$, \eqref{equa_our_definition_dynamic_regret_switching_cost} supports more general switching cost than previous work. 
\item \textbf{Support more general convex $f_t$.} \cite{Li:2018uy} defines the  the dynamic regret with switching cost by 
\begin{align}
\nonumber
\sum_{t=1}^T f_t(\x_t) + \sum_{t=1}^{T-1}\lrnorm{\x_{t+1} - \x_t}^2 - \min_{\{\z_t\}_{t=1}^T \in \Xcal^T} \lrincir{\sum_{t=1}^T f_t(\z_t) + \sum_{t=1}^{T-1}\lrnorm{\z_{t+1} - \z_t}^2},
\end{align} and they use $\sum_{t=1}^{T-1} \lrnorm{\x_{t+1}^\ast - \x_t^\ast}$ to bound the regret. Here, $\x_t^\ast = \argmin_{\x\in\Xcal} f_t(\x)$.  It implicitly assumes that the difference between $\x_{t+1}^\ast$ and $\x_t^\ast$ are bounded. It is reasonable for a strongly convex function $f_t$, but may not be guaranteed for a general convex function $f_t$.  Additionally, \cite{Li:2018uy} uses $\lrnorm{\x_{t+1}^\ast - \x_t^\ast}^2$ to bound the switching cost, which is more sensitive to the significant change than $\lrnorm{\x_{t+1}^\ast - \x_t^\ast}$. But, it is  less effective to bound the slight change between them, which is not suitable for many problems such as dynamic control of data centers \cite{Lin:2012:OOS}.  
\end{itemize}

\subsection{Algorithm}

We use mirror descent \cite{BECK2003167} in the online setting, and present the algorithm MD-OA for the OA setting and the algorithm MD-OCO for the OCO setting, respectively.

As illustrated in Algorithms \ref{algo_proximal_oa} and \ref{algo_proximal_oco}, both MD-OA and MD-OCO are performed iteratively. For every round, MD-OA first observes the loss function $f_t$, and then makes the decision $\x_t$ at the $t$-th round.  But, MD-OCO first makes the decision $\x_t$, and then observe the loss function $f_t$. Therefore, MD-OA usually makes the decision based on the observed $f_t$ for the current round, but MD-OCO has to predict a decision for the next round based on the received $f_t$. 

Note that both MD-OA and MD-OCO requires to solve a convex optimizaiton problem to update $x$. The complexity is dominated by the domain $\Xcal$ and the distance function $\Phi$. Besides, both of them lead to $\Ocal{d}$ memory cost. They lead to  comparable cost of computation and memory.

\section{Theoretical analysis}
\label{sect_theoretical_analysis}
In this section, we present our main analysis results about the proposed dynamic regret for both MD-OA and MD-OCO, and discuss the difference between them. 

\subsection{New bounds for dynamic regret with switching cost}

The upper bound of dynamic regret for MD-OA is presented as follows.

\begin{Theorem}
\label{theorem_regret_oa_upper_bound}

Choose $\gamma  = \min \left\{ \frac{\mu}{L}, T^{-\frac{1}{1+\sigma}} D^{\frac{1}{1+\sigma}} \right\}$ in Algorithm \ref{algo_proximal_oa}. Under Assumption \ref{assumption_basic}, we have 
\begin{align}
\nonumber
\sup_{\{f_t\}_{t=1}^T \in \Fcal^T}\Rcal_D^{\textsc{MD-OA}}\lesssim T^{\frac{1}{\sigma+1}}D^{\frac{\sigma}{\sigma+1}} + T^{\frac{1}{\sigma+1}} D^{-\frac{1}{\sigma+1}}.
\end{align} That is,  Algorithm \ref{algo_proximal_oa} yields $\Ocal{T^{\frac{1}{\sigma+1}}D^{\frac{\sigma}{\sigma+1}}}$ dynamic regret with switching cost.

\end{Theorem}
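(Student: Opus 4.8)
The plan is to bound the two pieces of $\Rcal_D^{\textsc{MD-OA}}$ separately — the operating-cost regret $\sum_{t=1}^T \lrincir{f_t(\x_t) - f_t(\y_t^\ast)}$ and the switching-cost gap $\sum_{t=1}^{T-1}\lrincir{\lrnorm{\x_{t+1}-\x_t}^\sigma - \lrnorm{\y_{t+1}^\ast - \y_t^\ast}^\sigma}$ — and then optimize the trade-off over the step size $\gamma$. First I would invoke the proximal mirror-descent inequality for the update in Algorithm \ref{algo_proximal_oa}: since $f_t$ is revealed before $\x_t$ is chosen, the three-point identity for the Bregman divergence yields, for every comparator $\u$,
\[
\gamma\lrincir{f_t(\x_t) - f_t(\u)} \le B_\Phi(\u,\x_{t-1}) - B_\Phi(\u,\x_t) - B_\Phi(\x_t,\x_{t-1}).
\]
Setting $\u = \y_t^\ast$ and summing over $t$ reduces the operating-cost regret to a Bregman sum plus the non-positive term $-\sum_t B_\Phi(\x_t,\x_{t-1})$. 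The crucial point is that, unlike the OCO case, observing $f_t$ first means no $\gamma^2\lrnorm{\nabla f_t}^2$ gradient-variance term appears, which is exactly what later forces the exponents to depend on $\sigma$.

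Next I would handle the fact that the comparator $\y_t^\ast$ moves, so $\sum_t\lrincir{B_\Phi(\y_t^\ast,\x_{t-1}) - B_\Phi(\y_t^\ast,\x_t)}$ does not telescope. Re-indexing splits it into a genuine telescoping part $B_\Phi(\y_1^\ast,\x_0) - B_\Phi(\y_T^\ast,\x_T)$ plus correction terms $B_\Phi(\y_{t+1}^\ast,\x_t) - B_\Phi(\y_t^\ast,\x_t)$. Each correction is controlled by the comparator displacement: using $\lrnorm{\nabla\Phi}\le G$ from Assumption \ref{assumption_basic}, expanding the definition of $B_\Phi$ gives $B_\Phi(\y_{t+1}^\ast,\x_t) - B_\Phi(\y_t^\ast,\x_t) \lesssim G\lrnorm{\y_{t+1}^\ast - \y_t^\ast}$, and summing invokes the path-length budget $\sum_{t=1}^{T-1}\lrnorm{\y_{t+1}^\ast - \y_t^\ast}\le D$ that defines $\Lcal_D^T$. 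Combined with $B_\Phi(\y_1^\ast,\x_0)\le R^2$, this bounds the operating-cost regret by $\lesssim (R^2 + GD)/\gamma$ after discarding the non-positive Bregman term.

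For the switching cost I would bound a single step directly from the update rule: the optimality condition relates $\nabla\Phi(\x_t)$ to $\nabla\Phi(\x_{t-1}) - \gamma\nabla f_t(\x_t)$, and $\mu$-strong convexity of $\Phi$ makes $\nabla\Phi$ strongly monotone, so that $\mu\lrnorm{\x_t-\x_{t-1}}^2 \le \lrangle{\nabla\Phi(\x_t)-\nabla\Phi(\x_{t-1}),\x_t-\x_{t-1}}$; with $\lrnorm{\nabla f_t(\x_t)}\le G$ and Cauchy–Schwarz this yields $\lrnorm{\x_{t+1}-\x_t}\le \gamma G/\mu$. Raising to the power $\sigma$ and summing gives algorithm switching cost $\lesssim T\gamma^\sigma$, while the optimal switching cost $\sum_t\lrnorm{\y_{t+1}^\ast-\y_t^\ast}^\sigma\ge 0$ is simply dropped. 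Putting the pieces together leaves
\[
\Rcal_D^{\textsc{MD-OA}} \lesssim \frac{R^2 + GD}{\gamma} + T\gamma^\sigma,
\]
and the final step is to minimize the right-hand side in $\gamma$: the balance $GD/\gamma \asymp T\gamma^\sigma$ is attained at $\gamma\asymp (D/T)^{1/(\sigma+1)}$ (capped at $\mu/L$, which keeps the smoothness-based estimates valid and governs the large-$D$ regime), producing $T^{\frac{1}{\sigma+1}}D^{\frac{\sigma}{\sigma+1}}$ from the $GD/\gamma$ and $T\gamma^\sigma$ contributions and $T^{\frac{1}{\sigma+1}}D^{-\frac{1}{\sigma+1}}$ from the $R^2/\gamma$ contribution.

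I expect the main obstacle to be the non-telescoping comparator sum together with the switching-cost accounting at the fractional power $\sigma\in[1,2]$: one must bound $\lrnorm{\x_{t+1}-\x_t}^\sigma$ rather than the more convenient squared quantity, and the whole estimate hinges on the competition between the comparator-movement term $GD/\gamma$ and the algorithm's switching term $T\gamma^\sigma$ rather than against a gradient-variance term. It is precisely this competition that yields the $\sigma$-dependent exponents and distinguishes the OA bound from the OCO bound of Theorem \ref{theorem_regret_oco_upper_bound}.
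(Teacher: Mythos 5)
Your proposal is correct and follows essentially the same route as the paper's proof: the per-round mirror-descent inequality with the moving comparator $\y_t^\ast$, the re-indexed telescoping with correction terms bounded by $2G\lrnorm{\y_{t+1}^\ast-\y_t^\ast}$ summing to $O(GD/\gamma)$, the per-step bound $\lrnorm{\x_t-\x_{t-1}}\le 2G\gamma/\mu$ giving switching cost $O(T\gamma^\sigma)$, dropping the comparator's switching cost, and balancing $\gamma\asymp(D/T)^{1/(\sigma+1)}$ capped at $\mu/L$. The one imprecision is that your displayed inequality $\gamma\lrincir{f_t(\x_t)-f_t(\u)}\le B_\Phi(\u,\x_{t-1})-B_\Phi(\u,\x_t)-B_\Phi(\x_t,\x_{t-1})$ is the bound for a fully proximal update, whereas Algorithm \ref{algo_proximal_oa} linearizes $f_t$ at $\x_{t-1}$, so one must first incur an $\frac{L}{2}\lrnorm{\x_t-\x_{t-1}}^2$ smoothness term and absorb it into $-\frac{1}{\gamma}B_\Phi(\x_t,\x_{t-1})$ via $\gamma\le\mu/L$ --- a step you acknowledge implicitly through the cap but do not carry out explicitly.
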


\begin{algorithm}[!t]
   \caption{ MD-OA: Online Mirror Descent for OA.}
   \label{algo_proximal_oa}
   \begin{algorithmic}[1]
   \Require The learning rate $\gamma$, and the number of rounds $T$.
       \For {$t=1,2, ..., T$}
            \State Observe the loss function $f_t$. \Comment{\textit{Observe $f_t$ first.}}
            \State Query a gradient $\hat{\g}_t \in \nabla f_{t}(\x_{t-1})$.
            \State $\x_t = \argmin_{\x \in \Xcal} \lrangle{\hat{\g}_t, \x - \x_{t-1}} + \frac{1}{\gamma}B_{\Phi}(\x, \x_{t-1})$. \Comment{\textit{Play a decision after knowing $f_t$.}}
       \EndFor
       \State \textbf{return} $\x_{T}$
   \end{algorithmic}
\end{algorithm}

\begin{algorithm}[!t]
   \caption{ MD-OCO: Online Mirror Descent for OCO.}
   \label{algo_proximal_oco}
   \begin{algorithmic}[1]
   \Require The learning rate $\eta$, the number of rounds $T$, and $\x_0$.
       \For {$t=0,1, ..., T-1$}
        \State Play $\x_t$. \Comment{\textit{Play a decision first before knowing $f_t$.}}
        \State Receive a loss function $f_t$.
        \State Query a gradient $\bar{\g}_t \in \nabla f_t(\x_t)$.
        \State $\x_{t+1} = \argmin_{\x \in \Xcal} \lrangle{\bar{\g}_t, \x - \x_t} + \frac{1}{\eta}B_{\Phi}(\x, \x_t)$. 
       \EndFor
       \State \textbf{return} $\x_{T}$
   \end{algorithmic}
\end{algorithm}

\begin{Remark}
When $\sigma = 1$, MD-OA yields $\Ocal{\sqrt{TD}}$ dynamic regret, which achieves the state-of-the-art result in \cite{pmlr-v75-chen18b}. When $\sigma = 2$, MD-OA yields $\Ocal{T^{\frac{1}{3}}D^{\frac{2}{3}}}$ dynamic regret, which is a new result as far as we know.
\end{Remark}

However, we find different result for MD-OCO. The switching cost does not have an impact on the dynamic regret. 

\begin{Theorem}
\label{theorem_regret_oco_upper_bound}
Choose $\eta = \min\left\{\frac{\mu}{4}, \sqrt{\frac{D+G}{T}}\right \}$ in Algorithm \ref{algo_proximal_oco}. Under Assumption \ref{assumption_basic}, we have
\begin{align}
\nonumber
\sup_{\{f_{t}\}_{t=1}^T \in \Fcal^T} \Rcal_D^{\textsc{MD-OCO}} \lesssim \sqrt{TD}  + \sqrt{T}.
\end{align} That is, Algorithm \ref{algo_proximal_oco} yields $\Ocal{\sqrt{DT} + \sqrt{T}}$ dynamic regret with switching cost.
\end{Theorem}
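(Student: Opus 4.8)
The plan is to split $\Rcal_D^{\textsc{MD-OCO}}$ into an operating-cost gap $\sum_{t=1}^{T}\lrincir{f_t(\x_t)-f_t(\y_t^\ast)}$ and a switching-cost gap $\sum_{t=1}^{T-1}\lrincir{\lrnorm{\x_{t+1}-\x_t}^\sigma-\lrnorm{\y_{t+1}^\ast-\y_t^\ast}^\sigma}$, bound the two pieces separately, and then substitute $\eta=\min\{\mu/4,\sqrt{(D+G)/T}\}$. The crux of the theorem is to show that the switching-cost gap contributes only a lower-order term, so that the rate is driven entirely by the operating cost, in sharp contrast with the OA bound of Theorem~\ref{theorem_regret_oa_upper_bound}.

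For the operating-cost gap I would run the standard online-mirror-descent argument against the \emph{moving} comparator $\y_t^\ast$. Convexity of $f_t$ gives $f_t(\x_t)-f_t(\y_t^\ast)\le\lrangle{\bar{\g}_t,\x_t-\y_t^\ast}$, which I split as $\lrangle{\bar{\g}_t,\x_t-\x_{t+1}}+\lrangle{\bar{\g}_t,\x_{t+1}-\y_t^\ast}$. The first-order optimality condition of the update in Algorithm~\ref{algo_proximal_oco} (tested at $\x=\y_t^\ast$) together with the Bregman three-point identity bounds the second term by $\frac1\eta\lrincir{B_\Phi(\y_t^\ast,\x_t)-B_\Phi(\y_t^\ast,\x_{t+1})-B_\Phi(\x_{t+1},\x_t)}$, while Young's inequality bounds the first term by $\frac{\eta}{2\mu}\lrnorm{\bar{\g}_t}^2+\frac{\mu}{2\eta}\lrnorm{\x_t-\x_{t+1}}^2$. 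The decisive cancellation is that $\frac{\mu}{2\eta}\lrnorm{\x_t-\x_{t+1}}^2$ is wiped out by $-\frac1\eta B_\Phi(\x_{t+1},\x_t)\le-\frac{\mu}{2\eta}\lrnorm{\x_{t+1}-\x_t}^2$ (strong convexity of $\Phi$), leaving $f_t(\x_t)-f_t(\y_t^\ast)\le\frac{\eta}{2\mu}\lrnorm{\bar{\g}_t}^2+\frac1\eta\lrincir{B_\Phi(\y_t^\ast,\x_t)-B_\Phi(\y_t^\ast,\x_{t+1})}$, a bound that no longer involves the step length $\lrnorm{\x_{t+1}-\x_t}$. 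Summing over $t$, Assumption~\ref{assumption_basic} turns the gradient terms into $\frac{\eta TG^2}{2\mu}$. The Bregman sum does not telescope cleanly because the comparator drifts, so I would regroup it as $B_\Phi(\y_1^\ast,\x_1)+\sum_t\lrincir{B_\Phi(\y_t^\ast,\x_t)-B_\Phi(\y_{t-1}^\ast,\x_t)}$ and bound each drift increment by $2G\lrnorm{\y_t^\ast-\y_{t-1}^\ast}$ using the definition of $B_\Phi$ and $\lrnorm{\nabla\Phi}\le G$. The path-length constraint $\sum_t\lrnorm{\y_{t+1}^\ast-\y_t^\ast}\le D$ then caps the drift by $2GD$, and $B_\Phi(\y_1^\ast,\x_1)\le R^2$, giving operating regret $\lesssim\frac{\eta TG^2}{\mu}+\frac{R^2+GD}{\eta}$.

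For the switching-cost gap I would drop $-\lrnorm{\y_{t+1}^\ast-\y_t^\ast}^\sigma\le0$ and control the step length of MD-OCO: testing the optimality condition at $\x=\x_t$, invoking $\mu$-strong convexity of $\Phi$ and $\lrnorm{\bar{\g}_t}\le G$ yields $\lrnorm{\x_{t+1}-\x_t}\le\eta G/\mu$, hence $\sum_t\lrnorm{\x_{t+1}-\x_t}^\sigma\lesssim T\eta^\sigma$. Since $\eta\le\sqrt{(D+G)/T}$ by the choice of the minimum, we have $T\eta^\sigma=(T\eta)\,\eta^{\sigma-1}\le\eta^{\sigma-1}\sqrt{T(D+G)}$, and because $\eta\le\mu/4$ with $\sigma-1\ge0$ the factor $\eta^{\sigma-1}$ is an absolute constant; this is precisely why the switching cost collapses to the lower-order $\sqrt{T(D+G)}$ rather than affecting the rate. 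It then remains to substitute the two regimes of $\eta=\min\{\mu/4,\sqrt{(D+G)/T}\}$ into $\frac{\eta TG^2}{\mu}+\frac{R^2+GD}{\eta}+T\eta^\sigma$ and verify each piece is $\lesssim\sqrt{TD}+\sqrt{T}$, using $D/\sqrt{D+G}\le\sqrt{D}$ and the diameter bound $D\le(T-1)R=\Ocal{T}$ (so that $D\lesssim\sqrt{TD}$, and the regime $\eta=\mu/4$ forces $D\gtrsim T$, making the linear-in-$T$ residuals admissible). The main obstacle is exactly this final bookkeeping across both $\eta$ regimes combined with the drift bound: one must show simultaneously that the moving comparator costs only $\Ocal{D/\eta}$ and that the accumulated switching cost stays below $\sqrt{TD}+\sqrt{T}$, which is what establishes that the switching cost is invisible to the OCO dynamic regret rate.
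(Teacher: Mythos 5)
Your proposal is correct and its core engine is the same as the paper's: convexity plus the mirror-descent optimality condition (the paper's Lemma \ref{lemma_mirror_descent_update_rule}) against the moving comparator, the drift term $B_\Phi(\y_t^\ast,\x_t)-B_\Phi(\y_{t-1}^\ast,\x_t)\le 2G\lrnorm{\y_t^\ast-\y_{t-1}^\ast}$ (Lemma \ref{lemma_dynamic_regret_bound}) capped by $2GD$, and the final substitution of $\eta$. Where you genuinely diverge is the treatment of $\sum_t\lrnorm{\x_{t+1}-\x_t}^\sigma$. The paper keeps only half of the negative Bregman residual for Young's inequality, so a leftover $-\frac{\mu}{4\eta}\lrnorm{\x_{t+1}-\x_t}^2$ survives, and it absorbs the switching cost by maximizing $-\frac{\mu}{4\eta}z^2+z^\sigma$ pointwise in $z$, yielding an $\Ocal{\eta}$ per-round constant; this is slick but requires the exponent computation $\lrincir{\frac{\sigma}{2}}^{\frac{\sigma}{2-\sigma}}\lrincir{\frac{4\eta}{\mu}}^{\frac{\sigma}{2-\sigma}}$, which degenerates at $\sigma=2$ and must be handled by the side condition $\eta\le\mu/4$. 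You instead spend the entire Bregman residual on exact cancellation in the operating-cost bound and control the switching cost separately via the step-length bound $\lrnorm{\x_{t+1}-\x_t}\lesssim\eta G/\mu$ (the MD-OCO analogue of the paper's Lemma \ref{lemma_distance_between_x_oa}, whose proof transfers verbatim to Algorithm \ref{algo_proximal_oco}), giving $T\eta^\sigma\lesssim T\eta\lesssim\sqrt{T(D+G)}$. Both routes land on the same $\Ocal{T\eta}$ contribution; yours is more transparent about \emph{why} the switching cost is invisible (the iterates simply cannot move more than $\Ocal{\eta}$ per round) and avoids the degenerate formula, while the paper's gets the bound ``for free'' from the unused curvature without needing a separate step-length lemma. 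Your closing bookkeeping (the WLOG normalization $D\lesssim TR$ and the observation that the regime $\eta=\mu/4$ entails $T\lesssim D+G$) is actually more careful than the paper, which does not spell out the $\eta=\mu/4$ case at all.
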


\begin{Remark}
MD-OCO still yields $\Ocal{\sqrt{TD}  + \sqrt{T}}$ dynamic regret \cite{Gyorgy:2016} when there is no switching cost. It shows that the switching cost does not have an impact on the dynamic regret.
\end{Remark}

Before presenting the discussion, we show that MD-OCO is the optimum for dynamic regret because the lower bound of the problem matches with the upper bound yielded by MD-OCO. 

\begin{Theorem}
\label{theorem_lower_bound_oco}
Under Assumption \ref{assumption_basic}, the lower bound of the dynamic regret for the OCO problem is  
\begin{align}
\nonumber
\inf_{A\in\Acal} \sup_{\{f_{t}\}_{t=1}^T \in \Fcal^T} \Rcal_D^A  = \Omegacal{\sqrt{TD} + \sqrt{T}}.
\end{align}
\end{Theorem}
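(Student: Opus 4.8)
The plan is to establish the lower bound by the standard minimax (averaging) argument: I will construct a randomized family of loss sequences, all lying in $\Fcal^T$ under Assumption \ref{assumption_basic}, over which \emph{every} algorithm $A\in\Acal$ suffers expected dynamic regret $\Omegacal{\sqrt{TD}+\sqrt{T}}$, and then pass to the worst case via the elementary inequality $\sup_{\{f_t\}}\Rcal_D^A \ge \EE_{\{f_t\}}[\Rcal_D^A]$, followed by $\inf_A$. Since $\max\{a,b\}\ge\tfrac12(a+b)$, it suffices to produce the two summands $\sqrt{TD}$ and $\sqrt{T}$ by the same construction run at two different scales. This extends the no--switching-cost bound of \cite{Zhao:2018wx}; the new ingredient is showing that the switching term does not degrade the bound.

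The first key step is a reduction that neutralizes the switching cost. Because $A^\ast$ minimizes the \emph{full} cost over $\Lcal_D^T$, for any feasible comparator sequence $\{\z_t\}_{t=1}^T\in\Lcal_D^T$ we have $\text{cost}(A^\ast)\le\text{cost}(\{\z_t\})$, and since the algorithm's own switching cost is nonnegative,
\[
\Rcal_D^A \;\ge\; \text{cost}(A) - \text{cost}(\{\z_t\}) \;\ge\; \sum_{t=1}^T f_t(\x_t) - \sum_{t=1}^T f_t(\z_t) - \sum_{t=1}^{T-1}\lrnorm{\z_{t+1}-\z_t}^{\sigma}.
\]
Thus I never need to identify $A^\ast$ explicitly: it is enough to pick one convenient $\{\z_t\}$, lower bound the operating-cost gap, and verify that this comparator's own switching cost is dominated by that gap.

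For the construction I work in one dimension with $\Xcal=[-R,R]$ and linear losses $f_t(x)=\epsilon_t x$, where $\epsilon_t\in\{-G,+G\}$ are i.i.d.\ Rademacher signs (these are convex with constant, hence $L$-Lipschitz, gradient and satisfy the gradient bound $G$). I partition $[T]$ into $K$ equal blocks of length $L=T/K$. Since MD-OCO must commit to $x_t$ before seeing $f_t$, we have $\EE[\epsilon_t x_t]=0$, so $\EE[\sum_t f_t(x_t)]=0$ for any algorithm. Taking $z_t$ constant on each block and equal to that block's empirical minimizer ($\pm R$ according to the sign of the block's gradient sum), the two-sided random-walk estimate $\EE\,\bigl|\sum_{t\in\text{block}}\epsilon_t\bigr|=\Theta(\sqrt{L})$ yields an operating-cost gap of $\Omegacal{K\,R\,G\sqrt{L}}=\Omegacal{RG\sqrt{KT}}$. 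This comparator is piecewise constant with $K-1$ jumps of size at most $2R$, hence feasible for $\Lcal_D^T$ whenever $2R(K-1)\le D$, and its switching cost is $\sum_t\lrnorm{z_{t+1}-z_t}^{\sigma}\le(K-1)(2R)^{\sigma}=\Ocal{K}$. Choosing $K=\Theta(D/R)=\Theta(D)$ saturates the budget and gives operating gap $\Omegacal{\sqrt{TD}}$ against comparator switching cost $\Ocal{D}$, while $K=1$ recovers the static instance with gap $\Omegacal{\sqrt{T}}$.

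The step I expect to be the main obstacle is the balancing in the last display. I must guarantee that the comparator's switching cost---which is $\Ocal{D}$ for constant-size jumps, and more generally $\Ocal{K\rho^{\sigma-1}}$ if jumps have size $\rho$ and $K\rho\le D$---is genuinely dominated by the operating gain $\Omegacal{\sqrt{TD}}$. This forces the natural regime $D=\Ocal{T}$ (where $\sqrt{TD}$ exceeds a constant multiple of $D$) and requires careful tracking of the constants $R,G,\sigma$, together with a sharp \emph{lower} bound $\EE\,\bigl|\sum_{t\in\text{block}}\epsilon_t\bigr|=\Omega(\sqrt{L})$ (anti-concentration) so that each block truly contributes $\Omega(\sqrt{L})$ rather than merely $O(\sqrt{L})$. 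Tuning $\rho$ to maximize $\sqrt{TD\rho}$ subject to the budget while keeping $K\rho^{\sigma-1}$ small is the delicate part; once these constants line up, taking expectations, summing over blocks, and applying $\inf_A\sup_{\{f_t\}}\ge\inf_A\EE_{\{f_t\}}$ delivers $\Omegacal{\sqrt{TD}+\sqrt{T}}$, and everything else is routine bookkeeping.
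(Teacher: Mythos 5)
Your proposal is correct and follows essentially the same route as the paper's own proof: Rademacher linear losses so that any algorithm's expected operating cost vanishes, a piecewise-constant comparator with $\Theta(D/R)$ blocks whose blockwise sign choice plus the Khintchine-type estimate $\EE\left|\sum_{t\in\text{block}}\epsilon_t\right|=\Omegacal{\sqrt{T/K}}$ yields the $\Omegacal{\sqrt{TD}}$ and $\Omegacal{\sqrt{T}}$ terms, and absorption of the comparator's switching cost after dropping the algorithm's own nonnegative switching cost. If anything, your bound $(K-1)(2R)^{\sigma}=\Ocal{D}$ on the comparator's switching cost is tighter than the paper's crude $D^{\sigma}$ bound, and your explicit caveat that one needs the regime where $\sqrt{TD}$ dominates this term is a point the paper glosses over by asserting that $D^{\sigma}$ is a constant.
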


\begin{Remark}
When there is no switching cost, the lower bound of dynamic regret for OCO is $\Ocal{\sqrt{TD}+\sqrt{T}}$ \cite{Zhao:2018wx}. Theorem \ref{theorem_lower_bound_oco} achieves it for the case of switching cost. It implies that the switching cost does not let the online decision making in the OCO setting become more difficult. 
\end{Remark}

\subsection{Insights}

\textbf{Switching cost has a significant impact on the dynamic regret for the setting of OA.} According to Theorem \ref{theorem_regret_oa_upper_bound}, the switching cost has a significant impact on the dynamic regret of MD-OA. Given a constant $D$, a small $\sigma$ leads to a strong dependence on $T$, and  a large $\sigma$ leads to a weak dependence on $T$. The reason is that  a large $\sigma$ leads to a large learning rate, which is more effective to follow the dynamics in the environment than a small learning rate. 

\textbf{Switching cost does not have an impact on the dynamic regret for the setting of OCO.} According to Theorem 
\ref{theorem_regret_oco_upper_bound} and Theorem \ref{theorem_lower_bound_oco},  the dynamic regret yielded by MD-OCO is tight, and MD-OCO is the optimum for the problem. Although the switching cost exists, the dynamic regret yielded by MD-OCO does not have any difference. 

As we can see, there is a significant difference between the OA setting and the OCO setting. The reasons are presented as follows.
\begin{itemize}
\item MD-OA makes decisions after observing the loss function. It has known the potential operating cost and switching cost for any decision. Thus, it can make decisions to achieve a good tradeoff between the operating cost and switching cost.  
\item MD-OCO make decisions before observing the loss function. It only knows the  historical information and the potential switching cost, and does not know the potential operating cost for any decision at the current round. In the worst case, if the environment provides an adversary loss function to maximize the operating cost based on the decision played by MD-OCO, MD-OCO has to lead to $\Ocal{\sqrt{TD}+\sqrt{T}}$ regret even for the case of no switching cost \cite{Gyorgy:2016}. Although the potential switching cost is known, MD-OCO cannot make a better decision to reduce the regret due to unknown operating cost. 
\end{itemize}

\section{Empirical studies}
\label{sect_empirical_study}
In this section, we evaluate the total regret and the regret caused by switching cost for settings of both OA and OCO by running online mirror decent. Our experiments show the importance of knowing loss function before making a decision. 

\subsection{Experimental settings}
We conduct binary classification by using the logistic regression model. Given an instance $\a\in\RR^d$ and its label $y\in\{1,-1\}$, the loss function is
\begin{align}
\nonumber
f(\x) = \log\lrincir{1+\exp\lrincir{-y\a\Tr\x}}.
\end{align} In experiments, we let $\Phi(\x) = \frac{1}{2}\lrnorm{\x}^2$. 

We test four methods, including MD-OA, i.e., Algorithm \ref{algo_proximal_oa}, and MD-OCO, i.e., Algorithm \ref{algo_proximal_oco}, online balanced descent \cite{pmlr-v75-chen18b} denoted by BD-OA in the experiment, and multiple online gradient descent \cite{Zhang:2017:IDR} denoted by MGD-OCO in the experiment. Both MD-OA and BD-OA are two variants of online algorithm, and similarily both MD-OCO and MGD-OCO are two variants of online convex optimization.  We test those methods on three real datasets: \textit{usenet1}\footnote{\url{http://lpis.csd.auth.gr/mlkd/usenet1.rar}}, \textit{usenet2}\footnote{\url{http://lpis.csd.auth.gr/mlkd/usenet2.rar}}, and \textit{spam}\footnote{\url{http://lpis.csd.auth.gr/mlkd/concept_drift/spam_data.rar}}. The distributions of data streams change over time for those datasets, which is just the dynamic environment as we have discussed. More details about those datasets  and its dynamics are presented at: \url{http://mlkd.csd.auth.gr/concept_drift.html}.

We use the \textit{average loss} to test the regret, because they have the same optimal reference points $\{\y_t^\ast\}_{l=1}^t$. For the $t$-th round, the average loss is defined by 
\begin{align}
\nonumber
\underbrace{\frac{1}{t}\sum_{l=1}^t \log\lrincir{1+\exp\lrincir{-\y_l\A_l\Tr\x_l}} }_{\text{average loss caused by operating cost}} + \underbrace{\frac{1}{t}\sum_{l=0}^{t-1}\lrnorm{\x_{l+1} - \x_l}}_{\text{average loss caused by switching cost}},
\end{align} where $\A_l$ is the instance at the $l$-th round, and $\y_l$ is its label. Besides, we evaluate the average loss caused by operating cost separately, and denote it by OL. Similarly, SL represents the average loss caused by switching cost.

In experiment, we set $D = 10$. Since $G$, $\mu$, and $L$ are usually not known in practical scenarios, the learning rate is set by the following heuristic rules. We choose the learning rate $\gamma_t = \eta_t = \frac{\delta}{\sqrt{t}}$ for the $t$-th iteration, where $\delta$ is a given constants by the following rules. First, we set a large value $\delta = 10$. Then, we iteratively adjust the value of $\delta$ by $\delta \leftarrow \delta/2$ when $\delta$ cannot let the average loss converge.  If the first appropriate $\delta$ can let the average loss converge, it is finally chosen as the optimal learning rate.  We use the similar heuristic method to determine other parameters, e.g., the number of inner iterations in MGD-OCO. Finally, the mirror map function is $\frac{1}{2}\lrnorm{\cdot}^2$ for BD-OA.
 
\begin{figure*}[!]
\setlength{\abovecaptionskip}{0pt}
\setlength{\belowcaptionskip}{0pt}
\centering 
\subfigure[\textit{usenet1}, total loss, $\sigma = 1$]{\includegraphics[width=0.32\columnwidth]{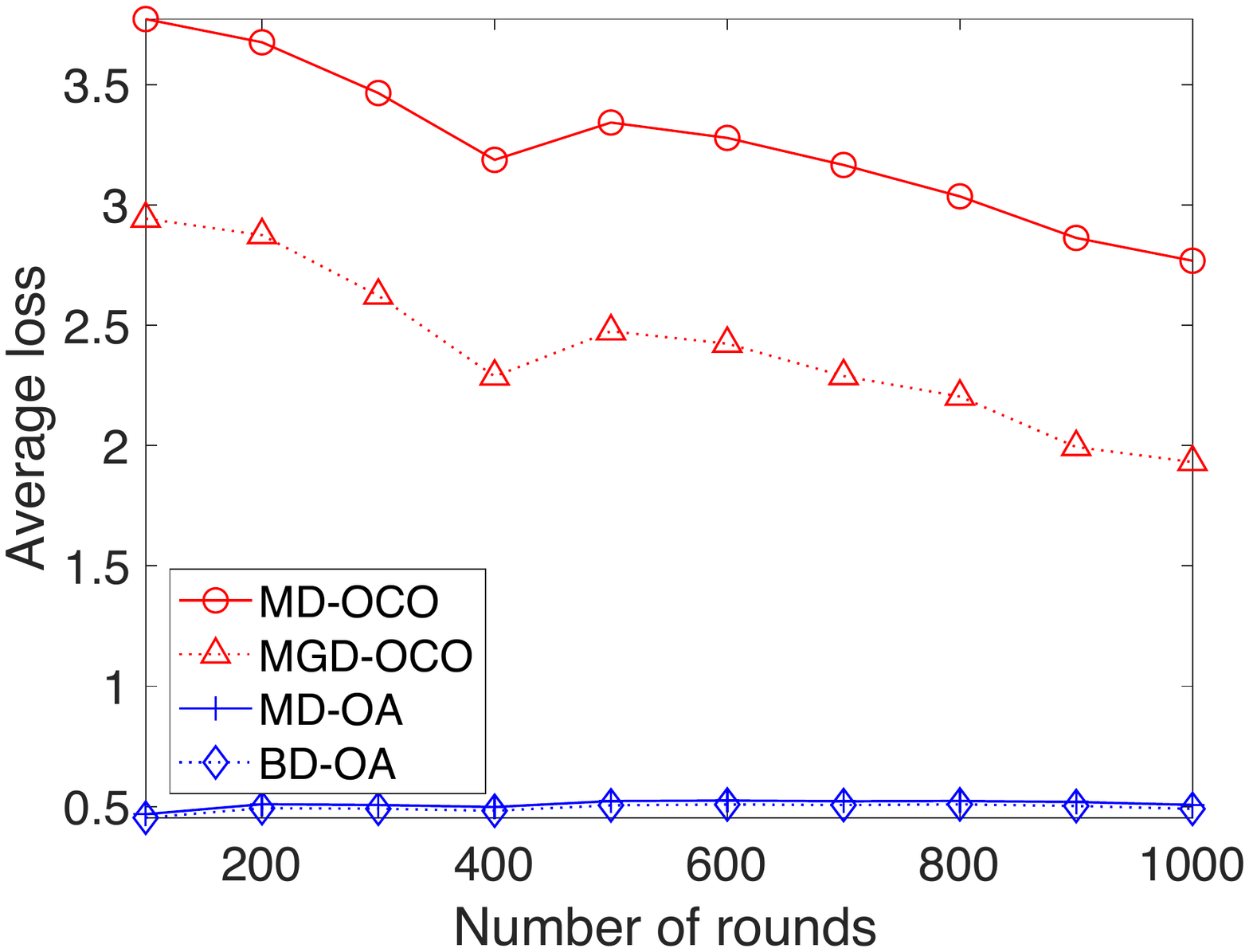}\label{figure_ave_loss_usenet1_sigma1}}
\subfigure[\textit{usenet1}, total loss, $\sigma = 1.5$]{\includegraphics[width=0.32\columnwidth]{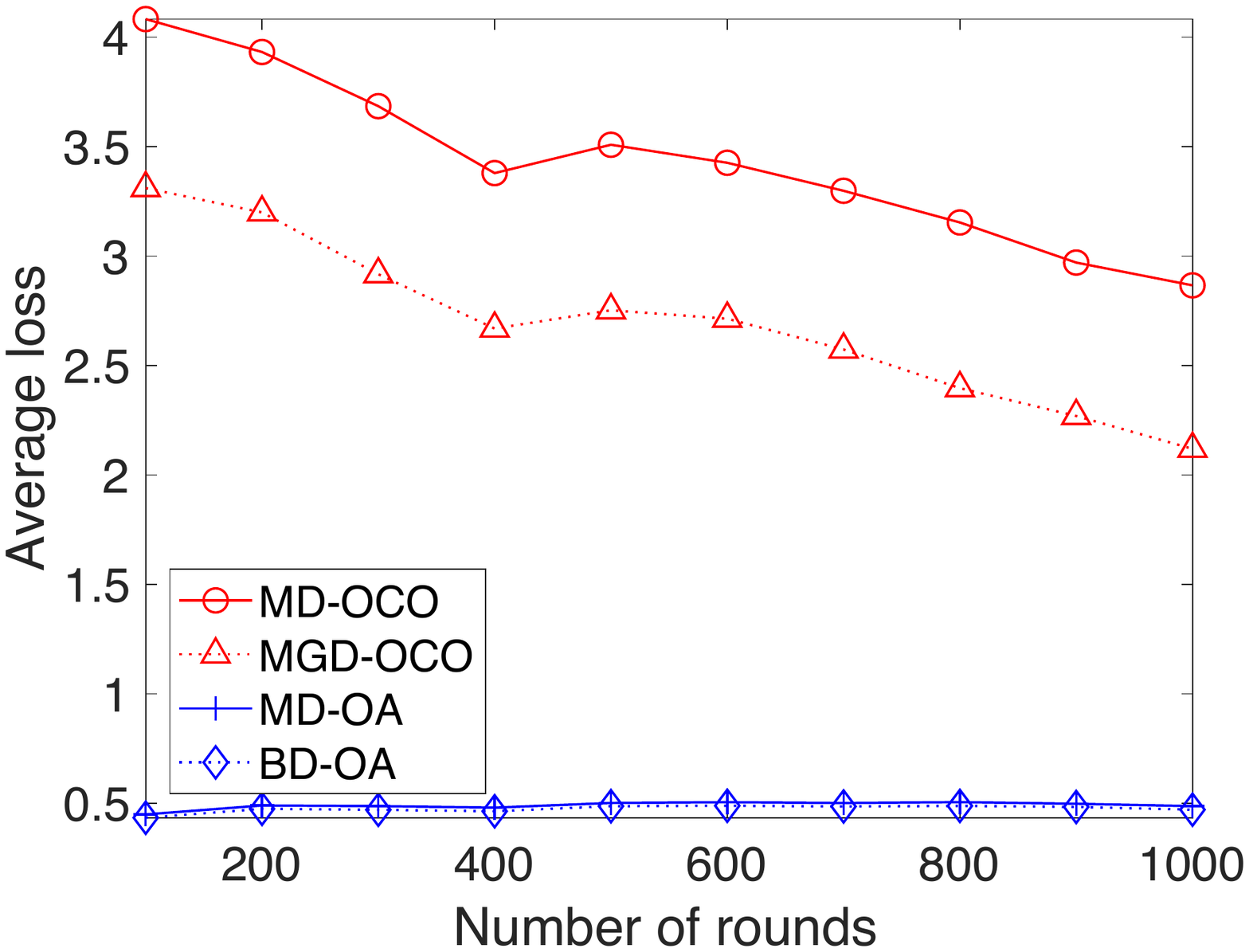}\label{figure_ave_loss_usenet1_sigma1_5}}
\subfigure[\textit{usenet1}, total loss, $\sigma = 2$]{\includegraphics[width=0.32\columnwidth]{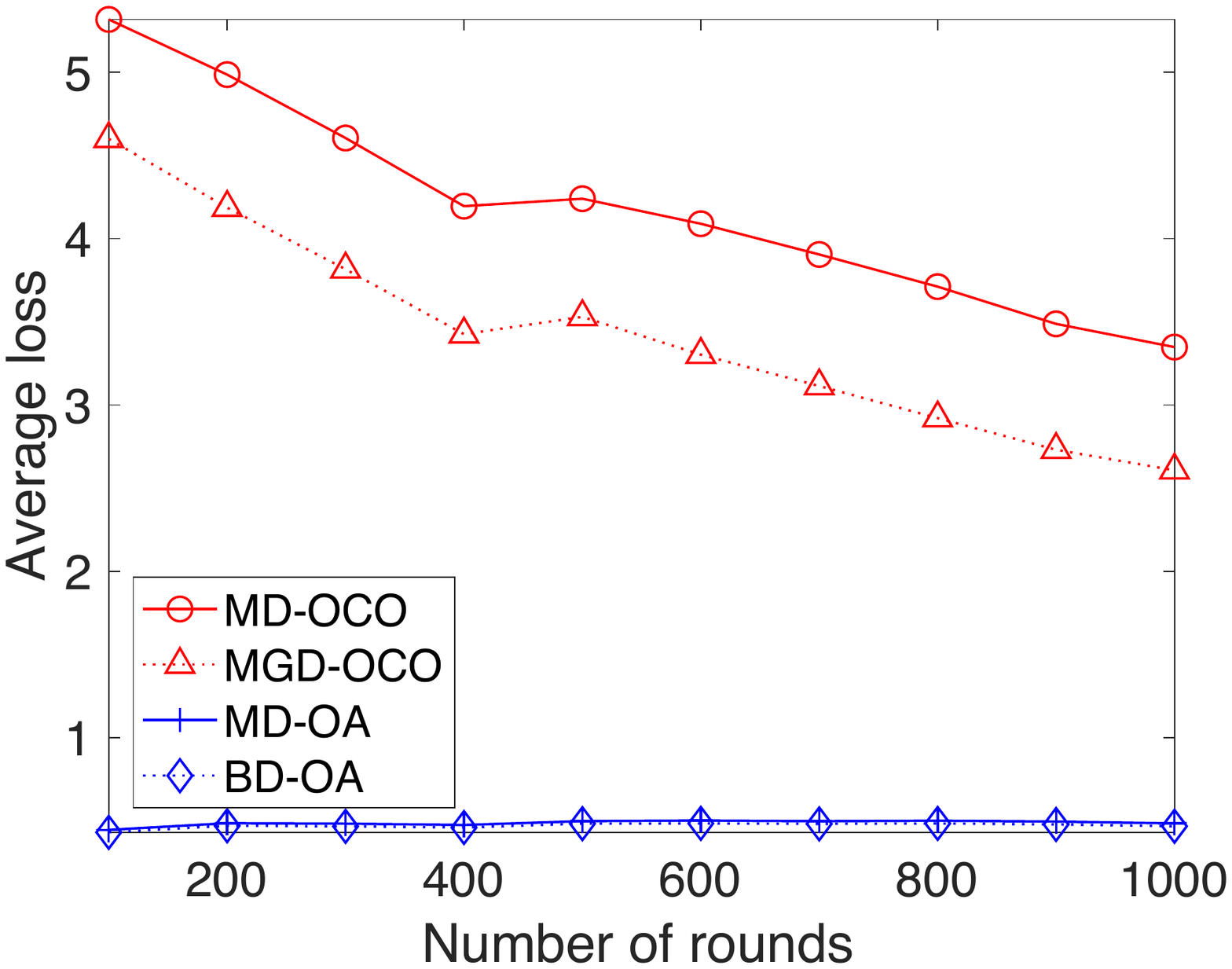}\label{figure_ave_loss_usenet1_sigma2}}
\subfigure[\textit{usenet2}, total loss, $\sigma = 1$]{\includegraphics[width=0.32\columnwidth]{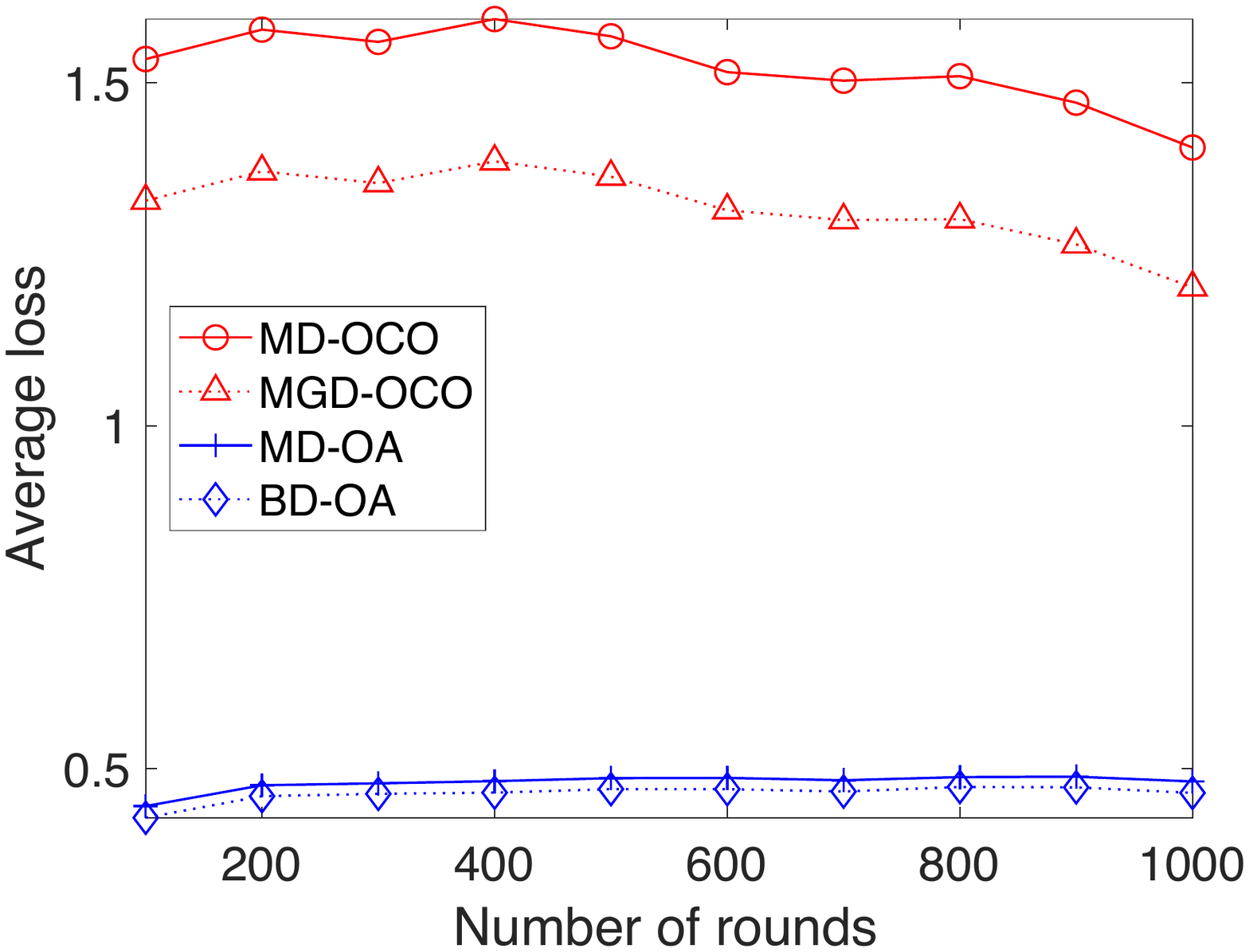}\label{figure_ave_loss_usenet2_sigma1}}
\subfigure[\textit{usenet2}, total loss, $\sigma = 1.5$]{\includegraphics[width=0.32\columnwidth]{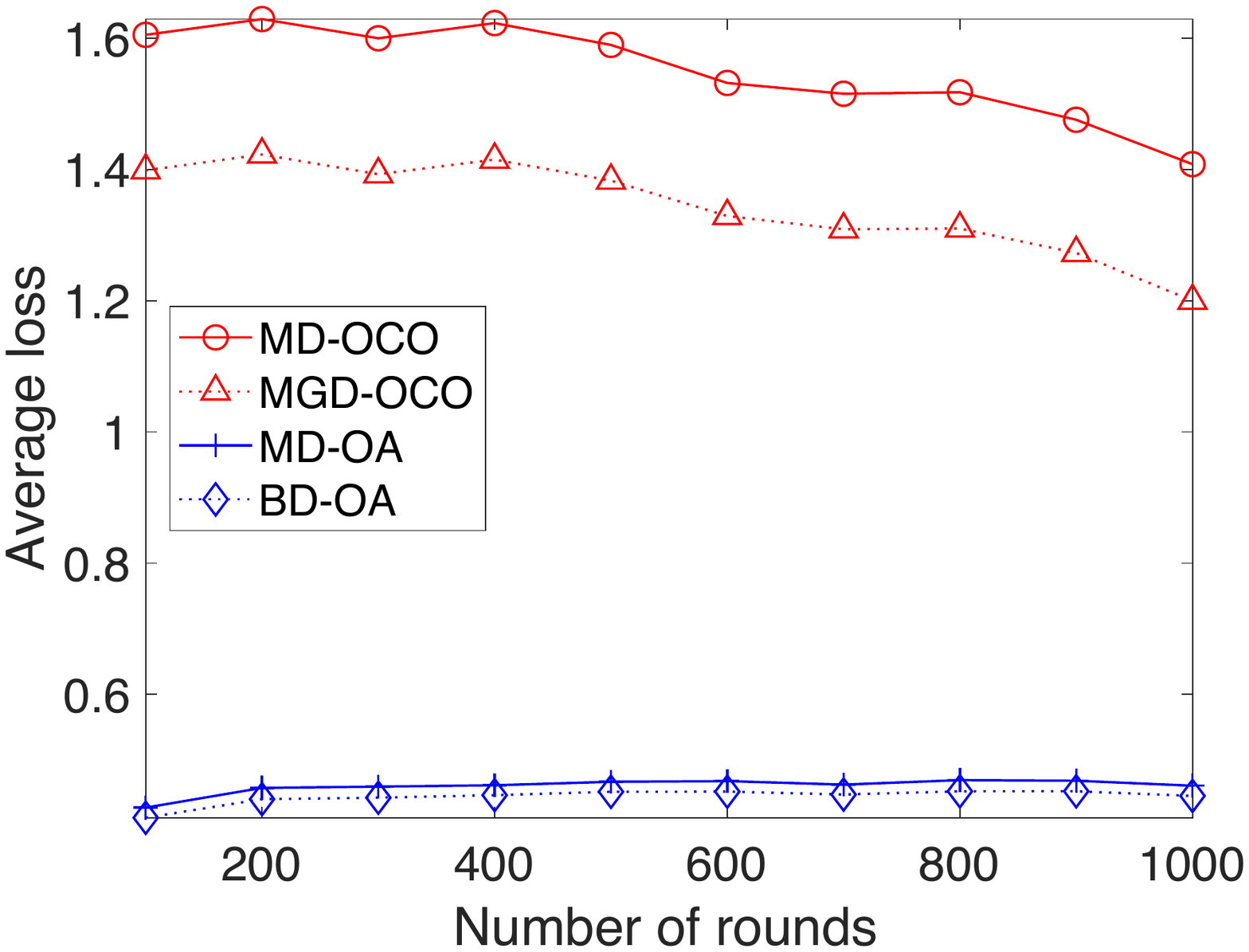}\label{figure_ave_loss_usenet2_sigma1_5}}
\subfigure[\textit{usenet2}, total loss, $\sigma = 2$]{\includegraphics[width=0.32\columnwidth]{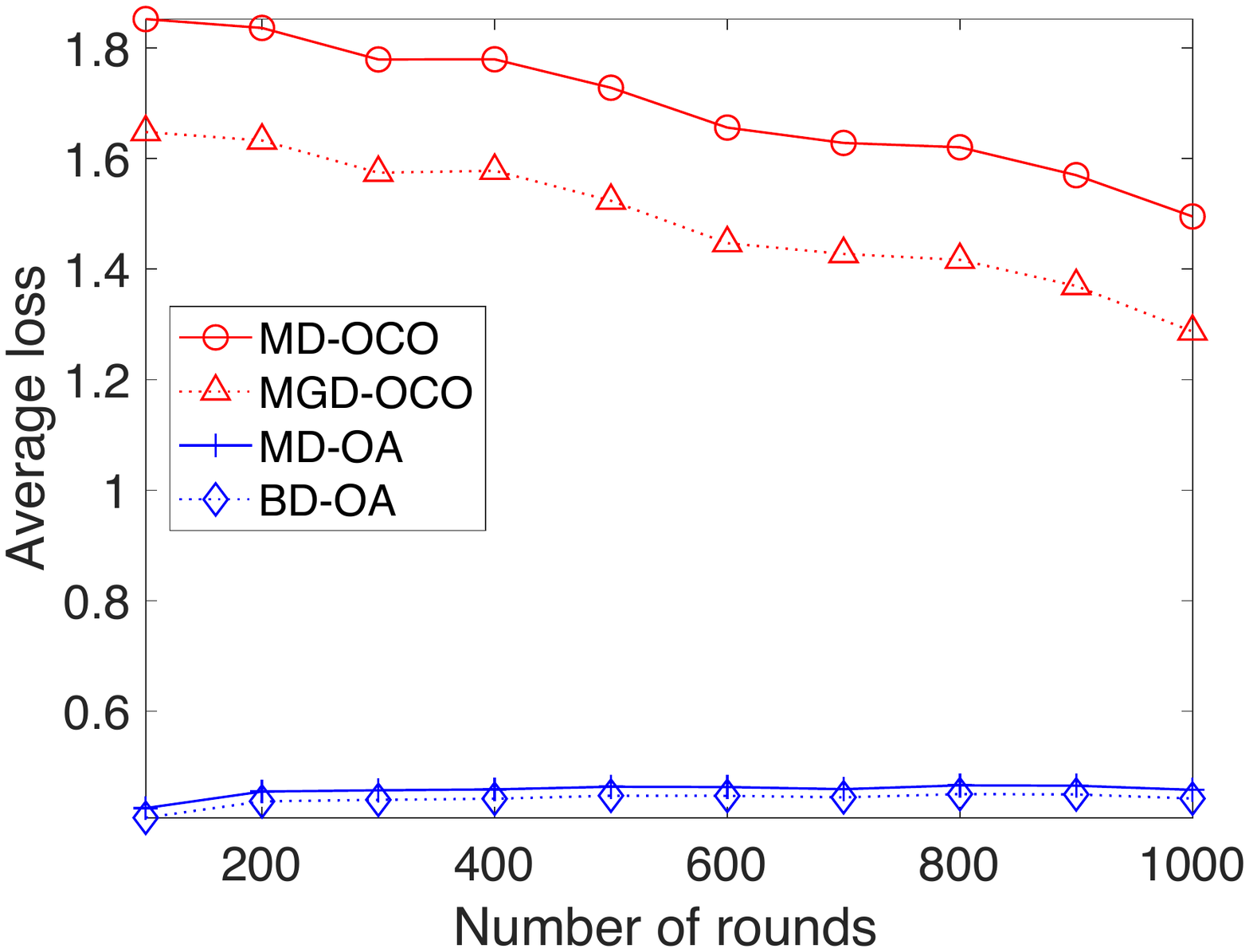}\label{figure_ave_loss_usenet2_sigma2}}
\subfigure[\textit{spam}, total loss, $\sigma = 1$]{\includegraphics[width=0.32\columnwidth]{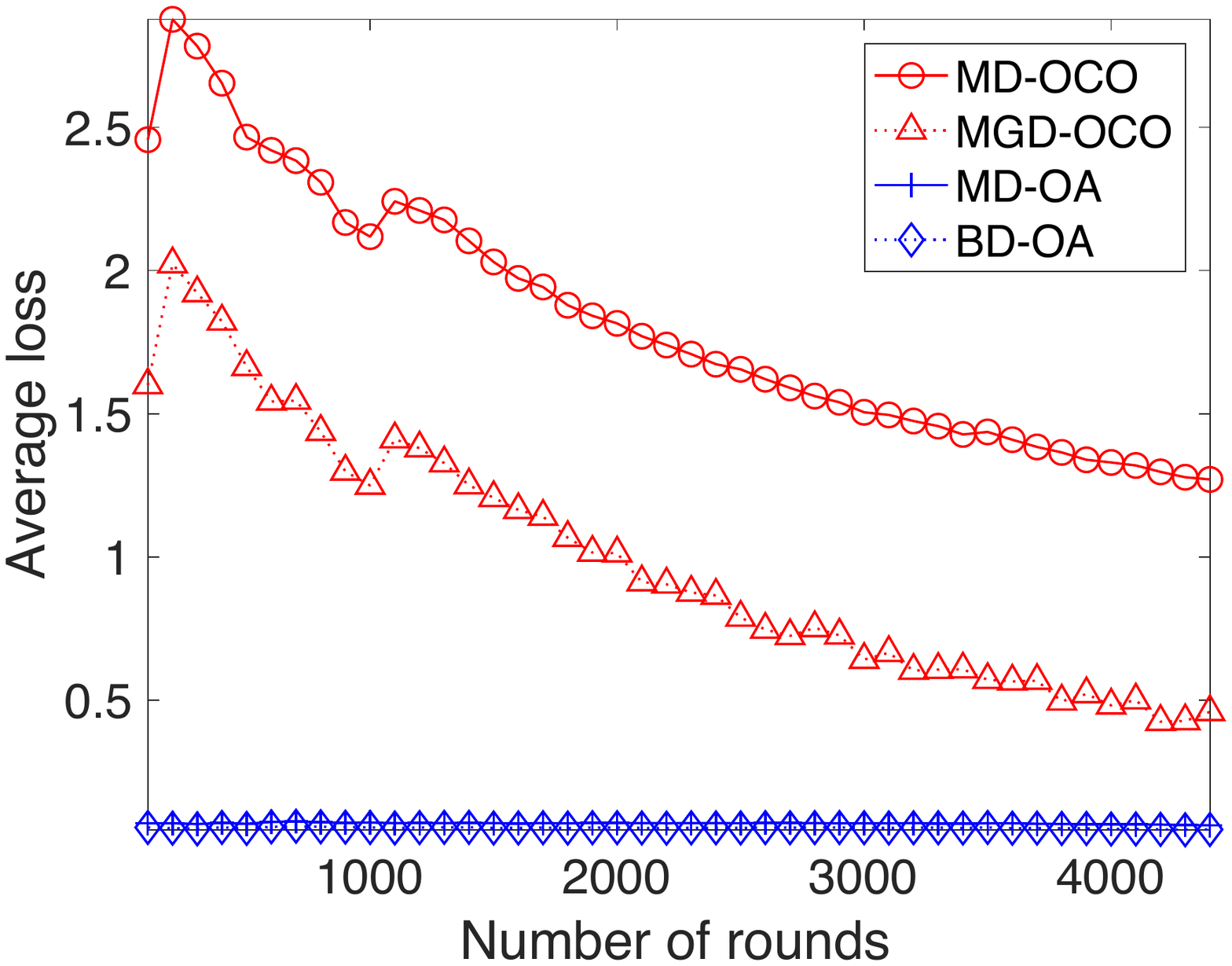}\label{figure_ave_loss_spam_sigma1}}
\subfigure[\textit{spam}, total loss, $\sigma = 1.5$]{\includegraphics[width=0.32\columnwidth]{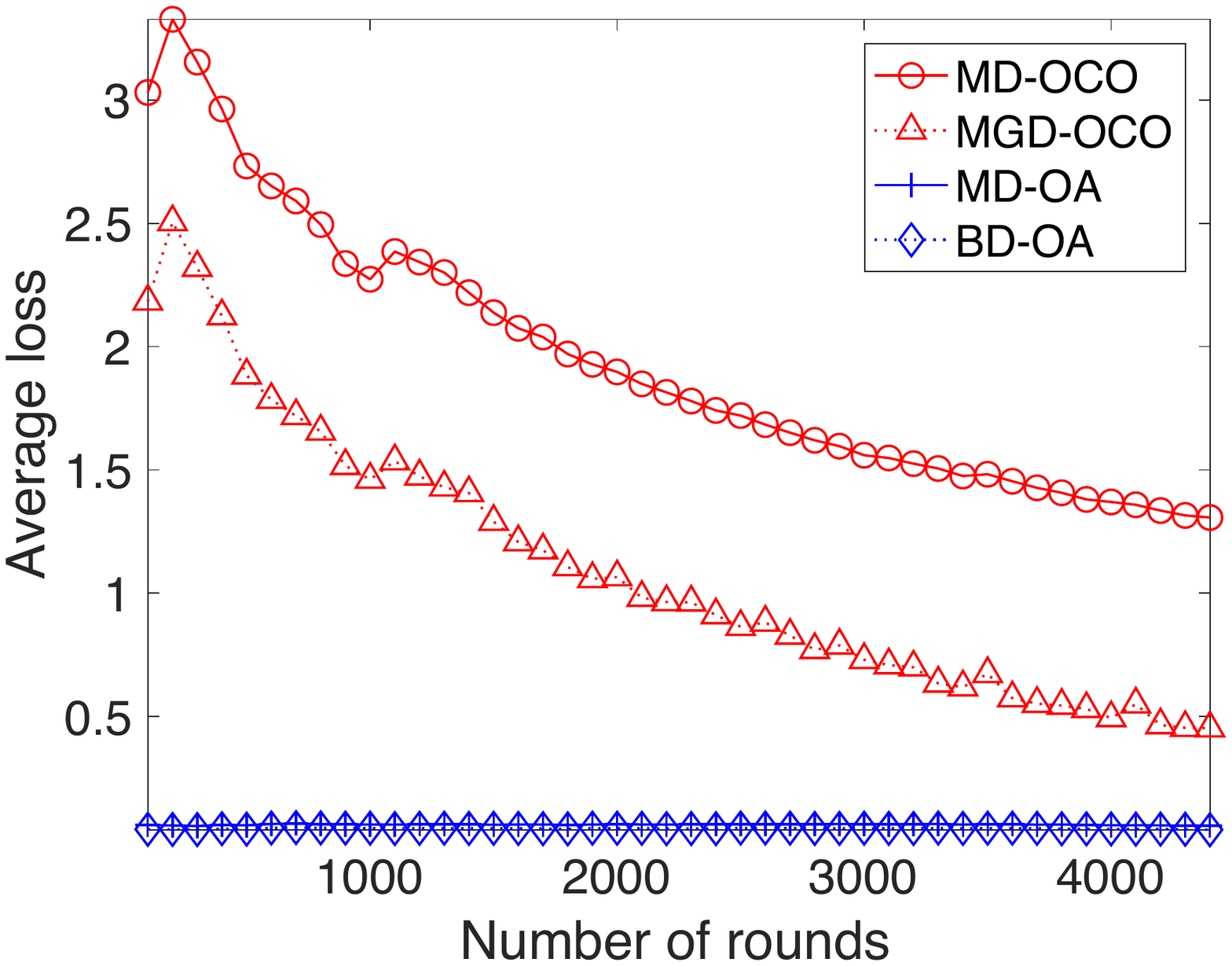}\label{figure_ave_loss_spam_sigma1_5}}
\subfigure[\textit{spam}, total loss, $\sigma = 2$]{\includegraphics[width=0.32\columnwidth]{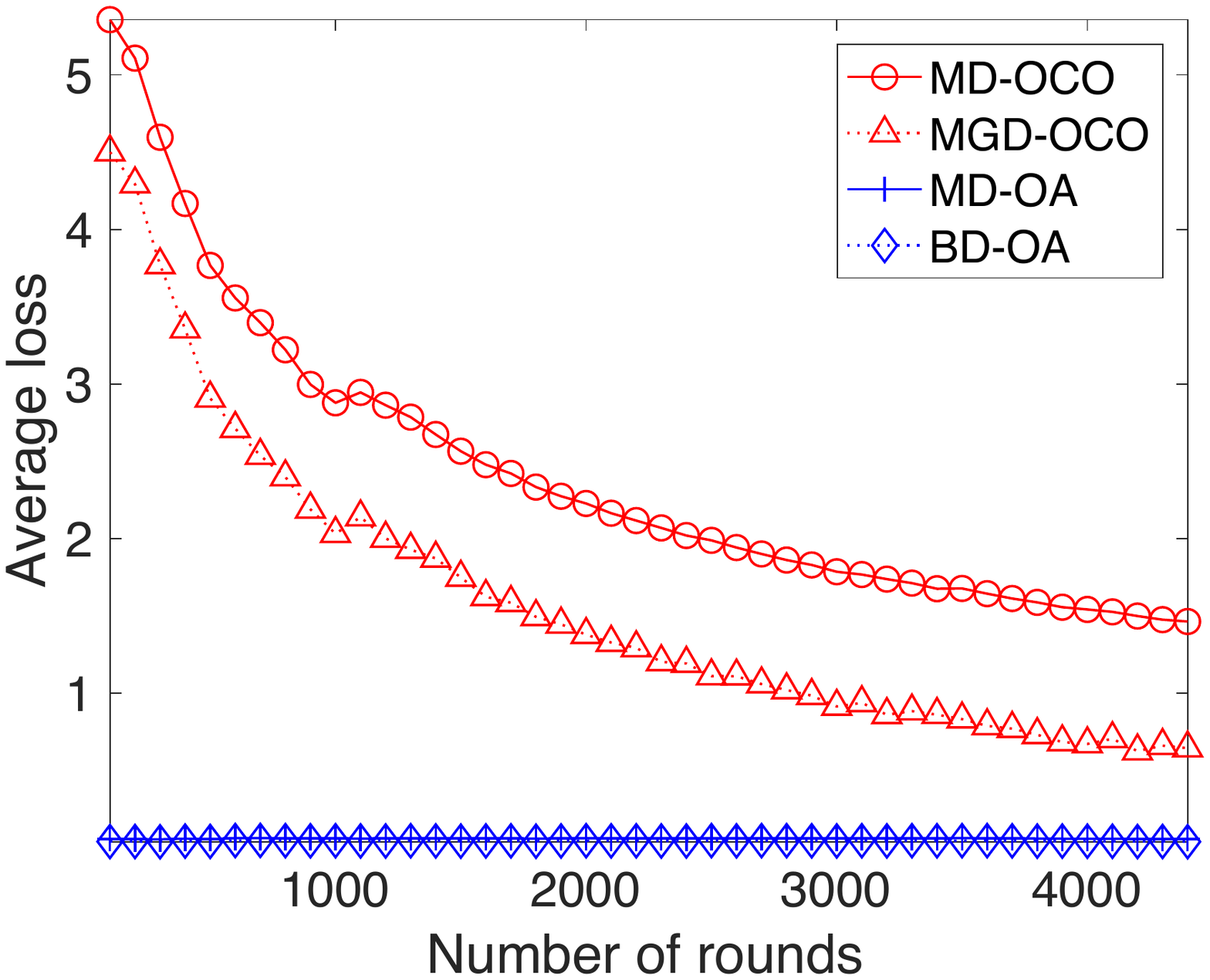}\label{figure_ave_loss_spam_sigma2}}
\caption{ OCO methods leads to large average loss than OA methods. }
\label{figure_ave_loss}
\end{figure*}

\begin{figure*}[!]
\setlength{\abovecaptionskip}{0pt}
\setlength{\belowcaptionskip}{0pt}
\centering 
\subfigure[\textit{usenet1}, separated loss, $\sigma = 1$]{\includegraphics[width=0.32\columnwidth]{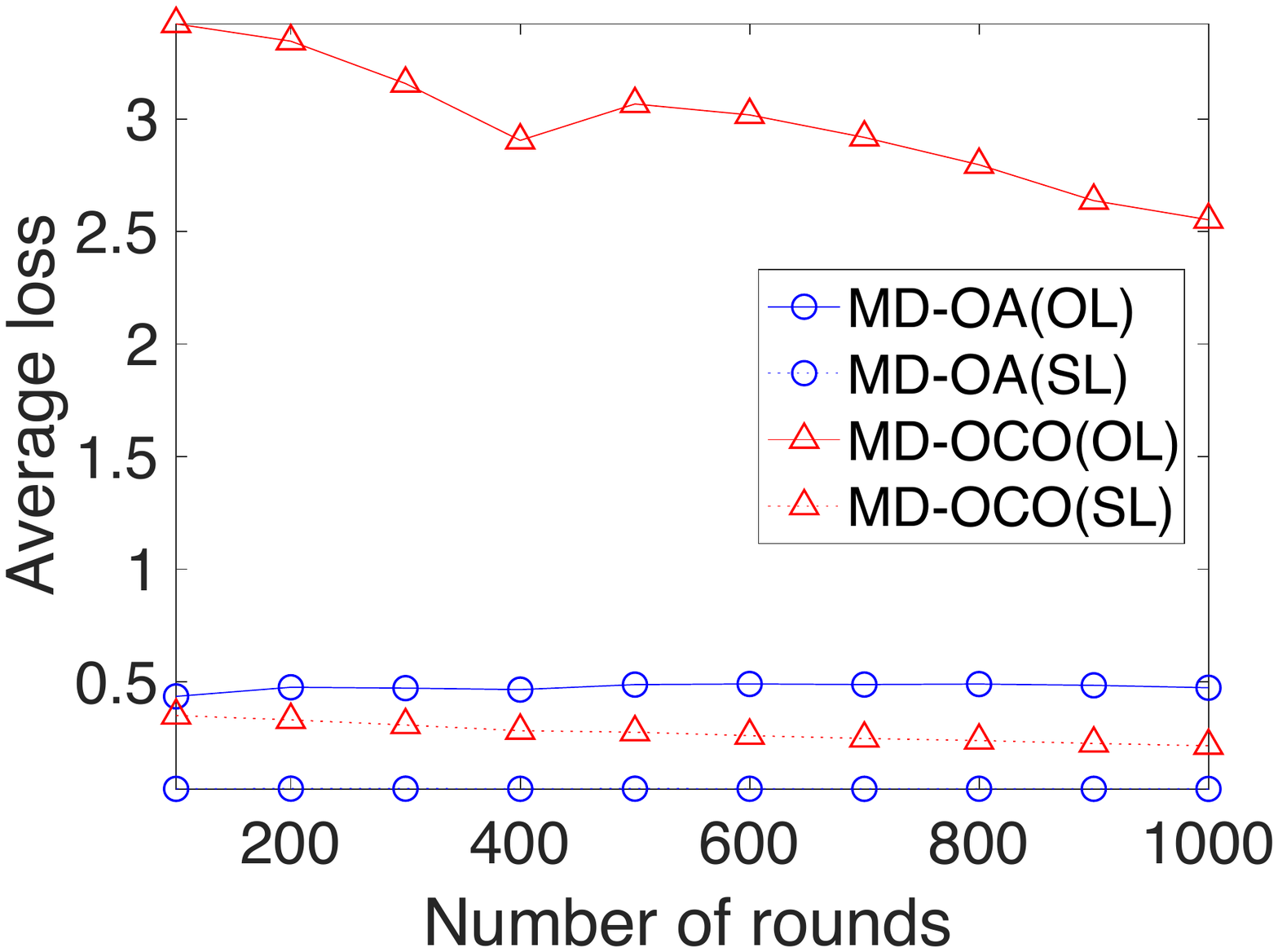}\label{figure_ave_loss_usenet1_sigma1_separate}}
\subfigure[\textit{usenet1}, separated loss, $\sigma = 1.5$]{\includegraphics[width=0.32\columnwidth]{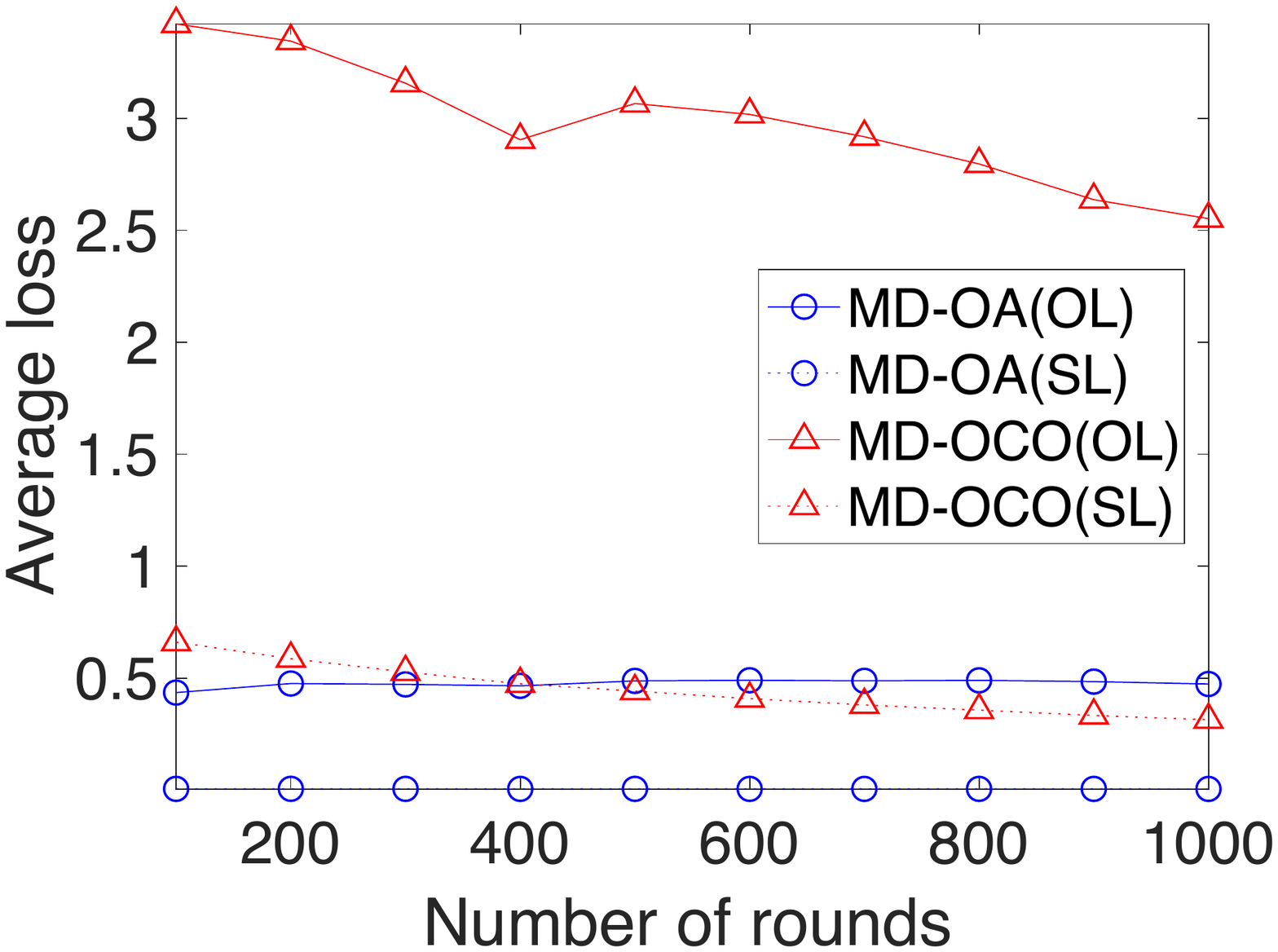}\label{figure_ave_loss_usenet1_sigma1_5_separate}}
\subfigure[\textit{usenet1}, separated loss, $\sigma = 2$]{\includegraphics[width=0.32\columnwidth]{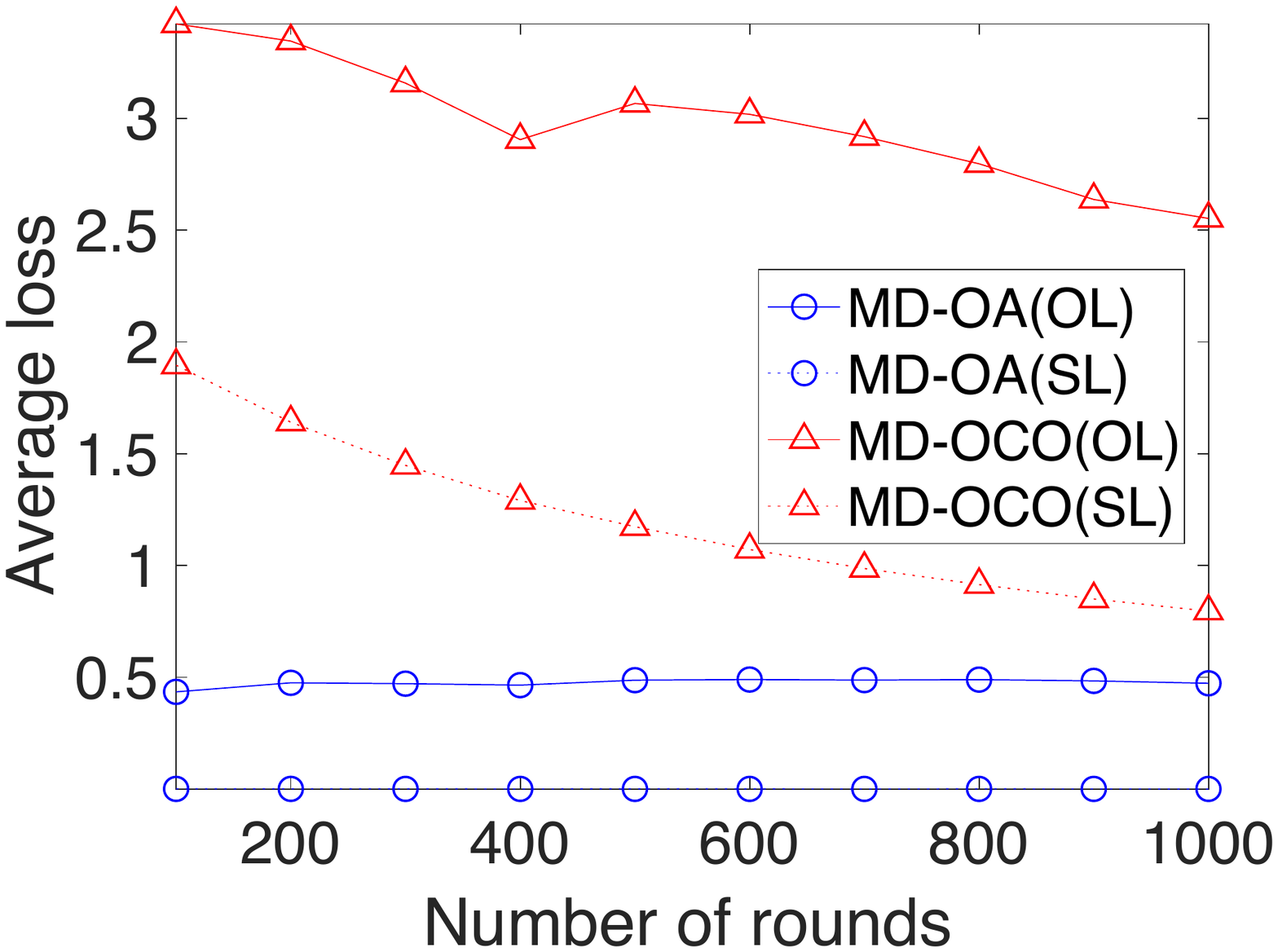}\label{figure_ave_loss_usenet1_sigma2_separate}}
\subfigure[\textit{usenet2}, separated loss, $\sigma = 1$]{\includegraphics[width=0.32\columnwidth]{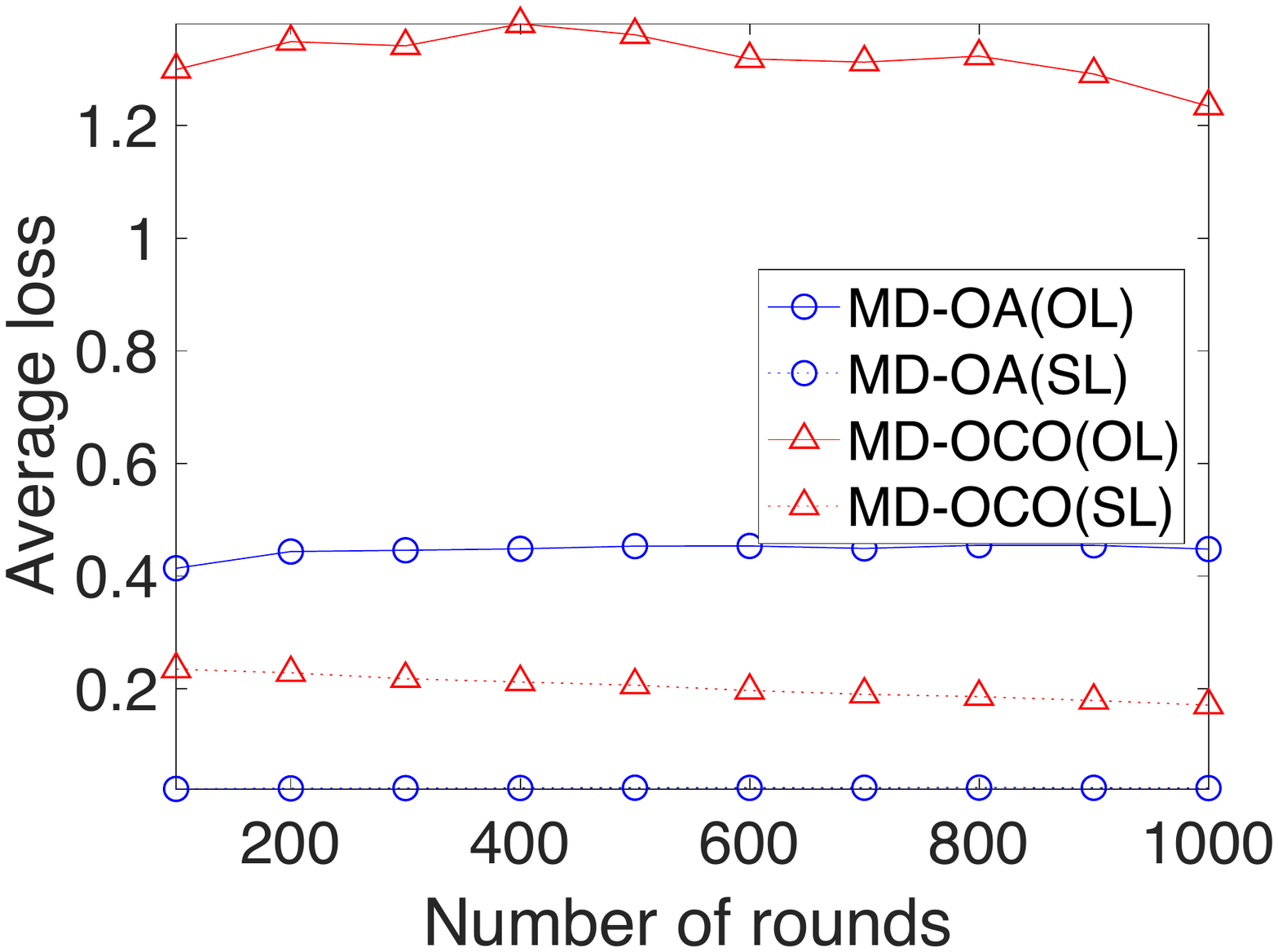}\label{figure_ave_loss_usenet2_sigma1_separate}}
\subfigure[\textit{usenet2}, separated loss, $\sigma = 1.5$]{\includegraphics[width=0.32\columnwidth]{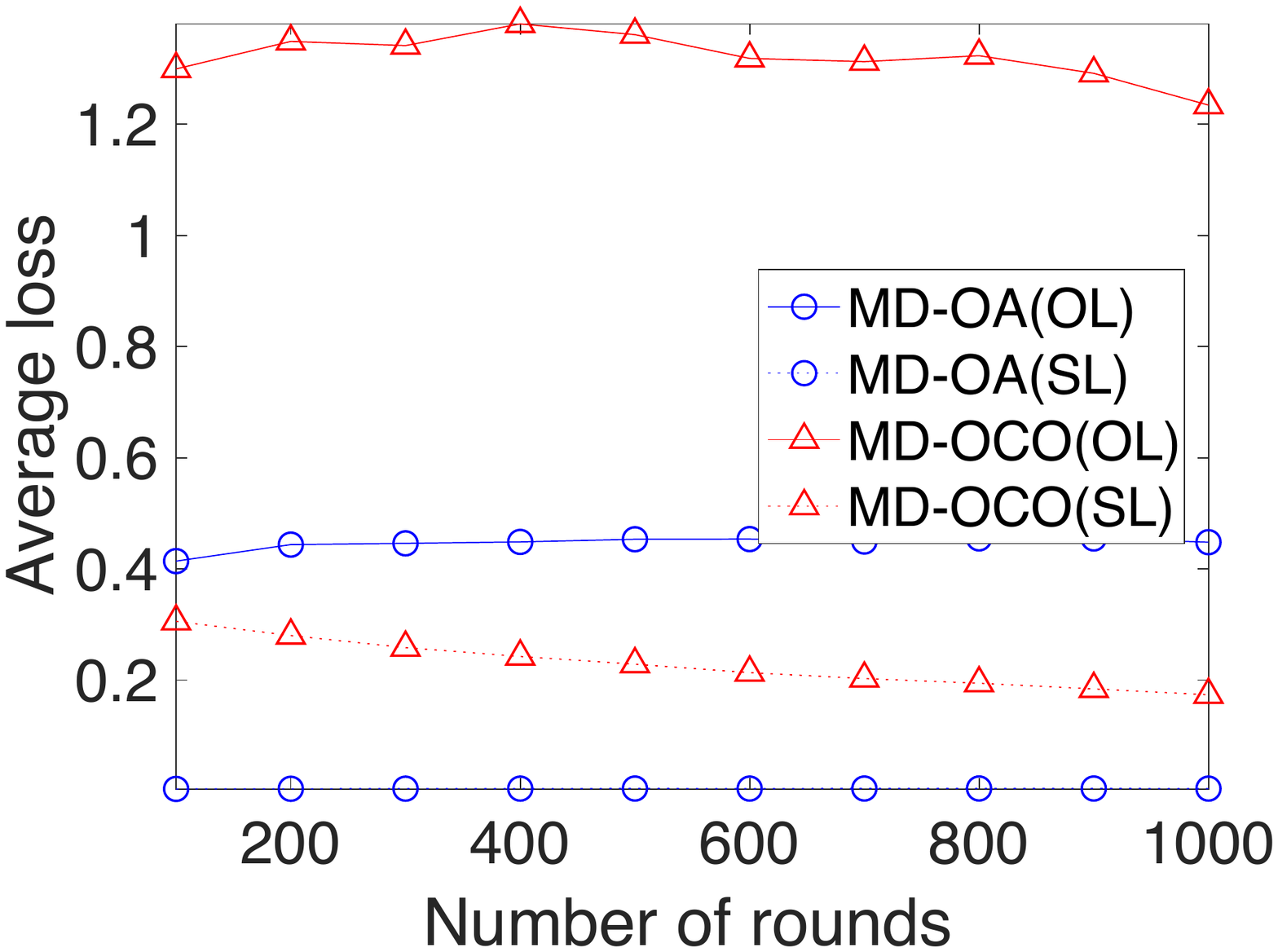}\label{figure_ave_loss_usenet2_sigma1_5_separate}}
\subfigure[\textit{usenet2}, separated loss, $\sigma = 2$]{\includegraphics[width=0.32\columnwidth]{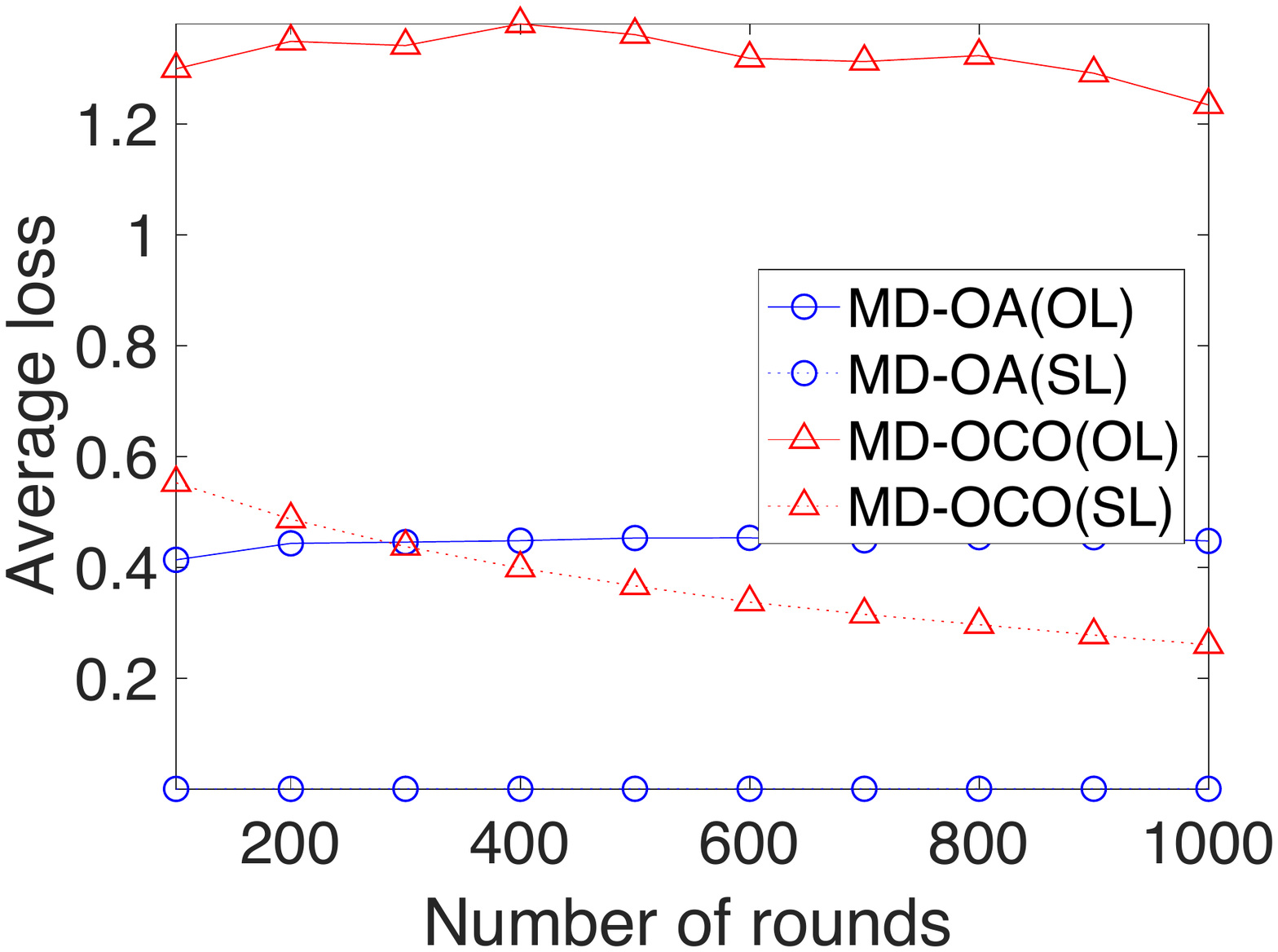}\label{figure_ave_loss_usenet2_sigma2_separate}}
\subfigure[\textit{spam}, separated loss, $\sigma = 1$]{\includegraphics[width=0.32\columnwidth]{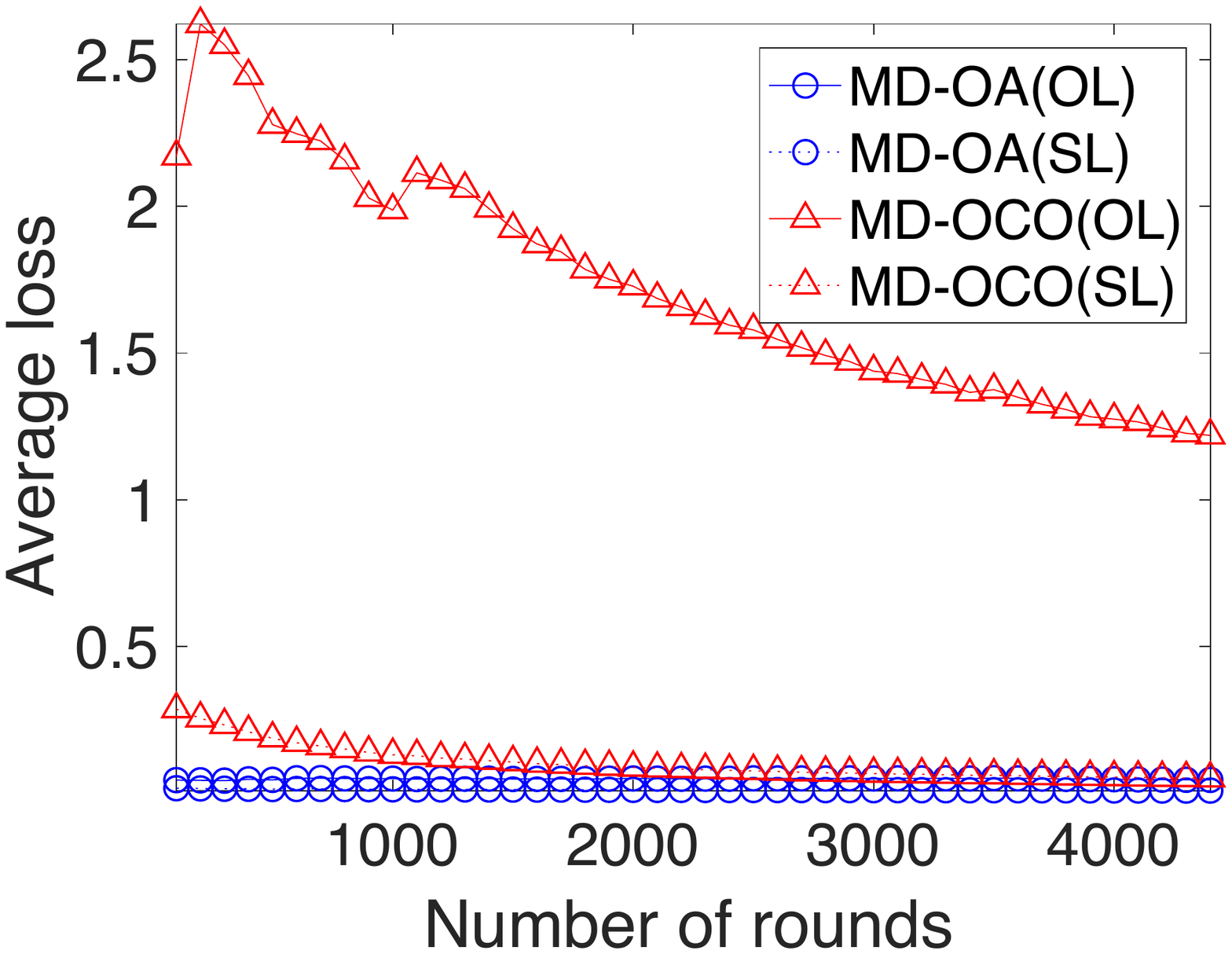}\label{figure_ave_loss_spam_sigma1_separate}}
\subfigure[\textit{spam}, separated loss, $\sigma = 1.5$]{\includegraphics[width=0.32\columnwidth]{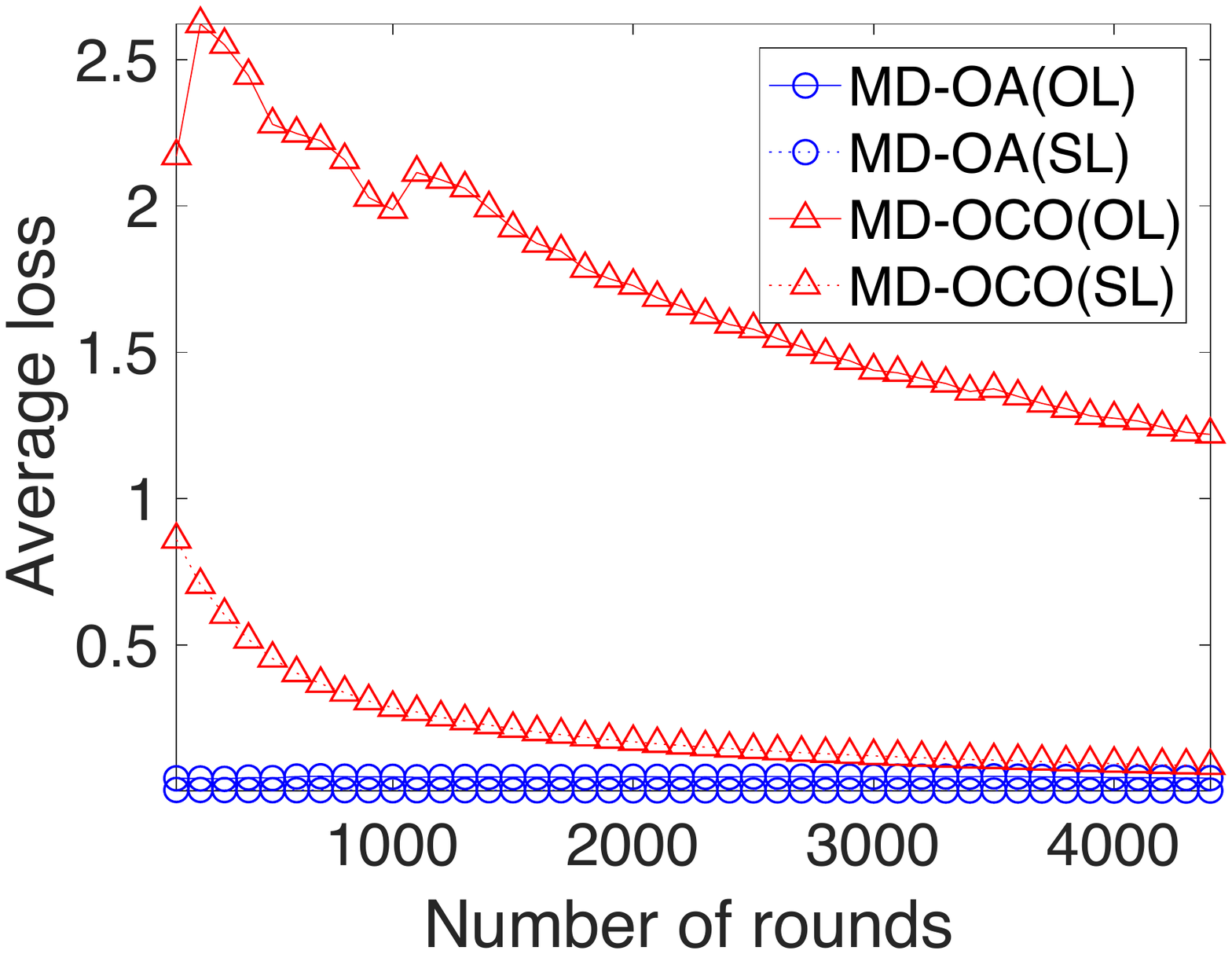}\label{figure_ave_loss_spam_sigma1_5_separate}}
\subfigure[\textit{spam}, separated loss, $\sigma = 2$]{\includegraphics[width=0.32\columnwidth]{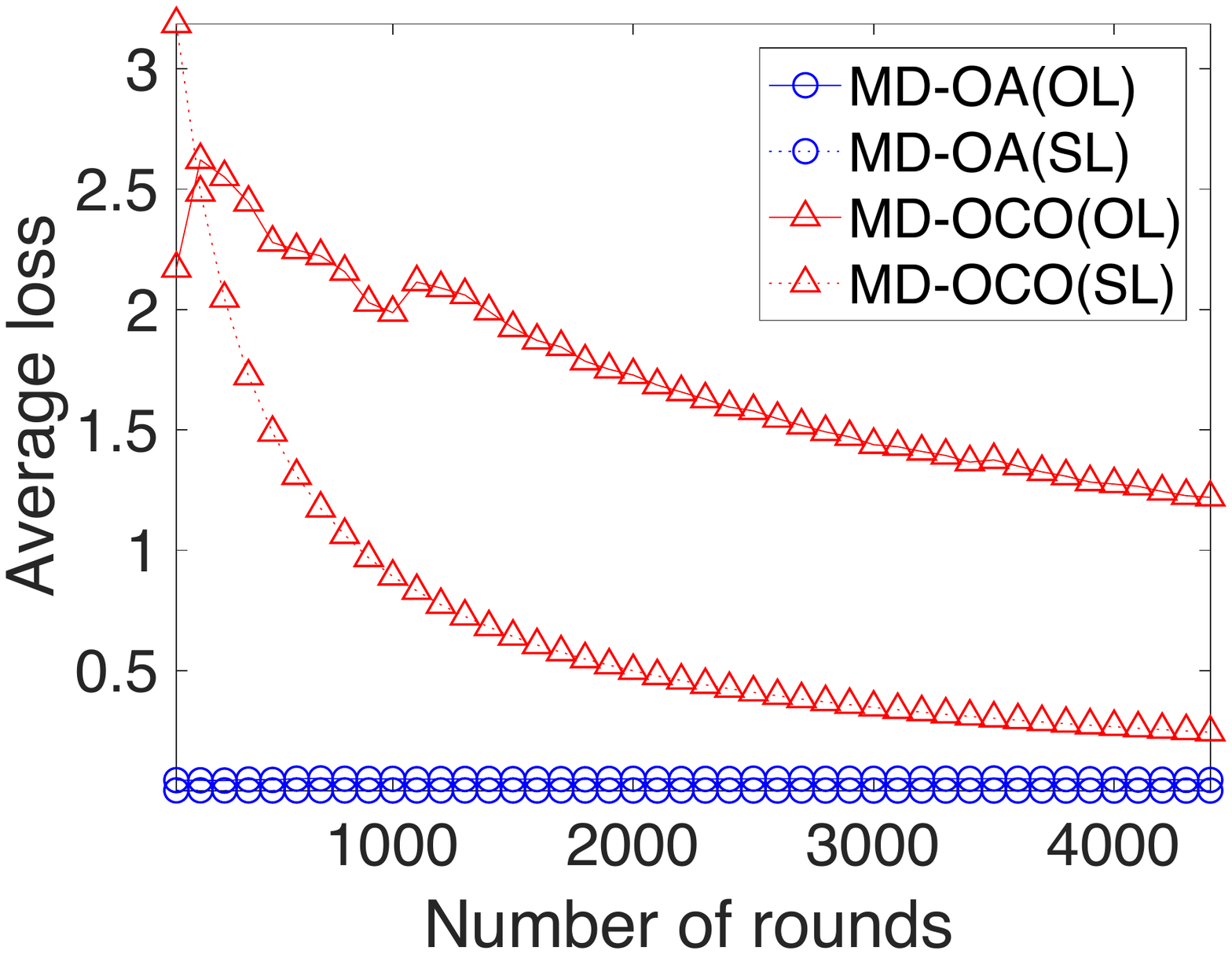}\label{figure_ave_loss_spam_sigma2_separate}}
\caption{ Comparing with MD-OCO, The superiority of MD-OA becomes significant for a large $\sigma$. }
\label{figure_ave_loss_separate}
\end{figure*}

\begin{figure}[!]
\setlength{\abovecaptionskip}{0pt}
\setlength{\belowcaptionskip}{0pt}
\centering 
\includegraphics[width=0.55\columnwidth]{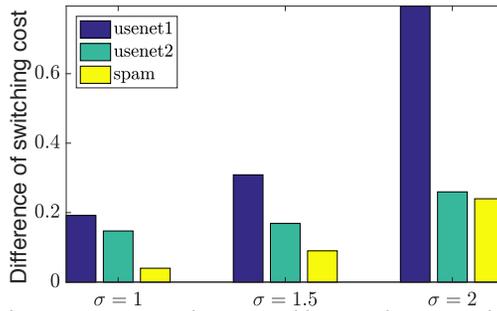}
\caption{ MD-OCO leads to more average loss caused by switching cost than MD-OA, especially for a large $\sigma$. }
\label{figure_ave_loss_difference_switching_cost}
\end{figure}

\subsection{Numerical results}

As shown in Figure \ref{figure_ave_loss}, both MD-OA and BD-OA are much more effcetive than MD-OCO and MGD-OCO to decrease the average loss during a few rounds of begining. Those OA methods yield much smaller average loss than OCO methods. The reason is that OA knows the loss function $f_t$ before making decision $\x_t$. But, OCO has to make decision before know the loss function. Benefiting from knowing the loss function $f_t$, OA reduces the average loss more efffectively than OCO. It matches with our theoretical analysis. That is, Algorithm \ref{algo_proximal_oa} leads to $\Ocal{T^{\frac{1}{1+\sigma}}D^{\frac{\sigma}{1+\sigma}}}$ regret, but Algorithm \ref{algo_proximal_oco} leads to $\Ocal{\sqrt{TD}+\sqrt{T}}$ regret. When $\sigma \ge 1$, OA tends to lead to smaller regret than OCO. The reason is that OA knows the potential loss before playing a decision for every round. But, OCO works in an adversary environment, and it has to play a decision before knowing the potential loss.  Thus, OA is able to play a better decision than OCO to decrease the loss. Additionally, we observe that both MD-OA and BD-OA reduce much more average loss than MD-OCO and MGD-OCO for a large $\sigma$, which validates our theoretical results nicely. It means that OA is more effective to reduce the switching cost than OCO for a large $\sigma$. Specifically, as shown in Figure \ref{figure_ave_loss_separate}, the average loss caused by switching cost of OA methods, i.e., MD-OA(SL), has unsignificant changes, but that of OCO methods, i.e., MD-OCO(SL), has remarkable increase for a large $\sigma$.

When handling the whole dataset, the final difference of switching cost between MD-OA and MD-OCO is shown in Figure \ref{figure_ave_loss_difference_switching_cost}. Here, the difference of switching cost is measured by using \textit{average loss caused by switching cost} of MD-OCO minus corresponding \textit{average loss caused by switching cost} of MD-OA. As we can see, it highlights that OA is more effective to decrease the switching cost. The superiority becomes significant for a large $\sigma$, which verifies our theoretical results nicely again.

\section{Conclusion and future work}
\label{sect_conclusion}
We have proposed a new  dynamic regret with switching cost and a new analysis framework for both online algorithms and online convex optimization.  We find that  the switching cost significantly impacts on the regret yielded by OA methods, but does not have an impact on the regret yielded by OCO methods. Empirical studies have validated our theoretical result.

Moreover, the switching cost in the paper is measured by using the norm of the difference between two successive decisions, that is, $\lrnorm{\x_{t+1} - \x_t}$. It is interesting to investigate whether the work can be extended to a more general distance measure function such as Bregman divergence $d_B(\x_{t+1}, \x_t)$ or Mahalanobis distance $d_M(\x_{t+1}, \x_t)$. Specifically, if the Bregman divergence\footnote{See details in \url{https://en.wikipedia.org/wiki/Bregman_divergence}.} is used, the switching cost is thus $d_B(\x_{t+1}, \x_t) = \psi(\x_{t+1}) - \psi(\x_t) - \lrangle{\nabla \psi(\x_t), \x_{t+1} - \x_t}$, where $\psi(\cdot)$ is a differentiable distance function.   If the Mahalanobis distance\footnote{See details in \url{https://en.wikipedia.org/wiki/Mahalanobis_distance}.} is used, the switching cost is thus $d_M(\x_{t+1}, \x_t) = \sqrt{(\x_{t+1} - \x_t)\Tr \S (\x_{t+1} - \x_t)}$, where $\S$ is the given covariance matrix. We leave the potential extension as the future work. 

Besides, our analysis provides regret bound for any given budget of dynamics $D$. It is a good direction to extend the work in the parameter-free setting, where analysis is adaptive to the dynamics $D$ of environment. Some previous work such as \cite{NIPS2018_7407} have proposed the adaptive online method and analysis framework. But, \cite{NIPS2018_7407} works in the expert setting, not a general setting of online convex optimization. It is still unknown whether their method can be used to extend our analysis.

\section{Acknowledgments}
This work was supported by the National Key R \& D Program of China 2018YFB1003203 and the National Natural Science Foundation of China (Grant No. 61672528, 61773392, and 61671463).

\bibliographystyle{ACM-Reference-Format}
\bibliography{reference}

\appendix
\section*{Proofs}
\begin{Lemma}
\label{lemma_mirror_descent_update_rule}
\nonumber
Given any vectors $\g$, $\u_t\in\Xcal$, $\u^\ast\in\Xcal$ , and a constant scalar $\lambda>0$, if 
\begin{align}
\nonumber
\u_{t+1} = \argmin_{\u\in\Xcal} \lrangle{\g, \u - \u_t} + \frac{1}{\lambda} B_{\Phi}(\u, \u_t),
\end{align} we have
\begin{align}
\nonumber
\lrangle{\g, \u_{t+1} - \u^\ast} \le \frac{1}{\lambda}\lrincir{ B_{\Phi}(\u^\ast, \u_t) -  B_{\Phi}(\u^\ast, \u_{t+1}) - B_{\Phi}(\u_{t+1}, \u_t) }.
\end{align}
\end{Lemma}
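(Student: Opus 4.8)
The plan is to derive the inequality from two ingredients: the first-order optimality condition for the proximal update that defines $\u_{t+1}$, and the three-point identity for the Bregman divergence $B_\Phi$. First I would note that the objective $h(\u) := \lrangle{\g, \u - \u_t} + \frac{1}{\lambda} B_\Phi(\u, \u_t)$ is convex in $\u$, since $\Phi$ is convex (hence $B_\Phi(\cdot, \u_t)$ is convex in its first argument) and the remaining term is linear. Because $\u_{t+1}$ minimizes $h$ over the convex set $\Xcal$ and $\u^\ast \in \Xcal$ is feasible, the variational characterization of the constrained minimizer yields $\lrangle{\nabla h(\u_{t+1}), \u^\ast - \u_{t+1}} \ge 0$. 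Using $\nabla_\u B_\Phi(\u, \u_t) = \nabla \Phi(\u) - \nabla \Phi(\u_t)$, this optimality condition reads
\begin{align}
\nonumber
\lrangle{\g + \tfrac{1}{\lambda}\lrincir{\nabla \Phi(\u_{t+1}) - \nabla \Phi(\u_t)}, \u^\ast - \u_{t+1}} \ge 0.
\end{align}

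Next I would simply rearrange this inequality to isolate the target inner product, obtaining
\begin{align}
\nonumber
\lrangle{\g, \u_{t+1} - \u^\ast} \le \frac{1}{\lambda}\lrangle{\nabla \Phi(\u_{t+1}) - \nabla \Phi(\u_t), \u^\ast - \u_{t+1}}.
\end{align}
The final step converts the right-hand side into Bregman divergences via the three-point identity: expanding $B_\Phi(\u^\ast, \u_t)$, $B_\Phi(\u^\ast, \u_{t+1})$, and $B_\Phi(\u_{t+1}, \u_t)$ directly from the definition $B_\Phi(\x, \y) = \Phi(\x) - \Phi(\y) - \lrangle{\nabla \Phi(\y), \x - \y}$ and cancelling the function values of $\Phi$ shows that
\begin{align}
\nonumber
\lrangle{\nabla \Phi(\u_{t+1}) - \nabla \Phi(\u_t), \u^\ast - \u_{t+1}} = B_\Phi(\u^\ast, \u_t) - B_\Phi(\u^\ast, \u_{t+1}) - B_\Phi(\u_{t+1}, \u_t).
\end{align}
Substituting this into the previous display gives exactly the claimed bound.

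The only delicate point, and the step I would be most careful about, is the optimality condition itself: because the minimization is over the constrained set $\Xcal$ rather than the whole ambient space, I cannot set $\nabla h(\u_{t+1}) = \0$, only assert the one-sided variational inequality along the feasible direction $\u^\ast - \u_{t+1}$. Convexity of $h$ guarantees this inequality, and it is all the argument requires; the remaining manipulations are the routine three-point algebra. Implicitly this uses that $\Xcal$ is convex and that $\Phi$ is differentiable with $\Phi$ convex, the latter being supplied by the strong convexity of $\Phi$ in Assumption \ref{assumption_basic}.
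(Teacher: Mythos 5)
Your proposal is correct and follows essentially the same route as the paper's proof: the constrained first-order optimality condition at $\u_{t+1}$ (which the paper derives by perturbing along the segment $\u_\tau = \u_{t+1} + \tau(\u^\ast - \u_{t+1})$, and which you invoke directly as the variational inequality), followed by the three-point identity to rewrite $\lrangle{\nabla\Phi(\u_{t+1}) - \nabla\Phi(\u_t), \u^\ast - \u_{t+1}}$ as $B_\Phi(\u^\ast,\u_t) - B_\Phi(\u^\ast,\u_{t+1}) - B_\Phi(\u_{t+1},\u_t)$. The only difference is presentational, and your explicit caution about using the one-sided variational inequality rather than $\nabla h(\u_{t+1})=\0$ is exactly the right point of care.
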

\begin{proof}

Denote $h(\u) = \lrangle{\g, \u-\u_t} + \frac{1}{\lambda}B_{\Phi}(\u, \u_t)$, and $\u_{\tau} = \u_{t+1} + \tau (\u^\ast - \u_{t+1})$. According to the optimality of $\x_t$, we have
\begin{align}
\nonumber
&0 \le  h(\u_\tau) - h(\u_{t+1}) \\ \nonumber
= & \lrangle{\g, \u_\tau - \u_{t+1}} + \frac{1}{\lambda}\lrincir{B_{\Phi}(\u_\tau, \u_t) - B_{\Phi}(\u_{t+1}, \u_t)} \\ \nonumber
= & \lrangle{\g, \tau (\u^\ast - \u_{t+1})} + \frac{1}{\lambda}\lrincir{ \Phi(\u_\tau) - \Phi(\u_{t+1})  + \lrangle{\nabla \Phi(\u_t), \tau (\u_{t+1} - \u^\ast)} } \\ \nonumber
\le & \lrangle{\g, \tau (\u^\ast - \u_{t+1})} + \frac{1}{\lambda} \lrangle{\nabla \Phi(\u_{t+1}), \tau (\u^\ast - \u_{t+1})} + \frac{1}{\lambda} \lrangle{\nabla \Phi(\u_t), \tau (\u_{t+1} - \u^\ast)}  \\ \nonumber
= & \lrangle{\g, \tau (\u^\ast - \u_{t+1})} + \frac{1}{\lambda} \lrangle{\nabla \Phi(\u_t)-\Phi(\u_{t+1}), \tau (\u_{t+1} - \u^\ast)}.
\end{align} Thus, we have
\begin{align}
\nonumber
& \lrangle{\g, \u_{t+1} - \u^\ast} \le \frac{1}{\lambda} \lrangle{\nabla \Phi(\u_t)-\Phi(\u_{t+1}), \u_{t+1} - \u^\ast}  \\ \nonumber
= & \frac{1}{\lambda}\lrincir{ B_{\Phi}(\u^\ast, \u_t) -  B_{\Phi}(\u^\ast, \u_{t+1}) - B_{\Phi}(\u_{t+1}, \u_t) }.
\end{align} It completes the proof.
\end{proof}

\begin{Lemma}
\label{lemma_dynamic_regret_bound}
For any $\x\in\Xcal$, we have
\begin{align}
\label{equa_theorem_upper_bound_oa_temp2}
B_\Phi(\y_{t+1}^\ast, \x) - B_{\Phi}(\y_t^\ast, \x) \le  2G \lrnorm{\y_{t+1}^\ast - \y_t^\ast}.
\end{align}
\end{Lemma}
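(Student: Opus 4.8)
The plan is to unfold both Bregman divergences straight from the definition and exploit the cancellation of everything that depends only on $\x$. Writing $B_\Phi(\y_{t+1}^\ast, \x) = \Phi(\y_{t+1}^\ast) - \Phi(\x) - \lrangle{\nabla\Phi(\x), \y_{t+1}^\ast - \x}$ and the analogous expression for $B_\Phi(\y_t^\ast, \x)$, the two $-\Phi(\x)$ terms and the $\x$-dependent parts of the inner products cancel upon subtraction, leaving
\begin{align}
\nonumber
B_\Phi(\y_{t+1}^\ast, \x) - B_\Phi(\y_t^\ast, \x) = \Phi(\y_{t+1}^\ast) - \Phi(\y_t^\ast) - \lrangle{\nabla\Phi(\x), \y_{t+1}^\ast - \y_t^\ast}.
\end{align}

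Next I would control the scalar difference $\Phi(\y_{t+1}^\ast) - \Phi(\y_t^\ast)$ by convexity of $\Phi$. The first-order condition $\Phi(\y_t^\ast) \ge \Phi(\y_{t+1}^\ast) + \lrangle{\nabla\Phi(\y_{t+1}^\ast), \y_t^\ast - \y_{t+1}^\ast}$ rearranges to $\Phi(\y_{t+1}^\ast) - \Phi(\y_t^\ast) \le \lrangle{\nabla\Phi(\y_{t+1}^\ast), \y_{t+1}^\ast - \y_t^\ast}$. Substituting this and grouping the two inner products over the common direction $\y_{t+1}^\ast - \y_t^\ast$ gives
\begin{align}
\nonumber
B_\Phi(\y_{t+1}^\ast, \x) - B_\Phi(\y_t^\ast, \x) \le \lrangle{\nabla\Phi(\y_{t+1}^\ast) - \nabla\Phi(\x),\; \y_{t+1}^\ast - \y_t^\ast}.
\end{align}

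Finally I would apply Cauchy--Schwarz (or, when $\lrnorm{\cdot}$ is a general norm, H\"older with the dual norm) followed by the triangle inequality, bounding the right-hand side by $\lrincir{\lrnorm{\nabla\Phi(\y_{t+1}^\ast)} + \lrnorm{\nabla\Phi(\x)}}\lrnorm{\y_{t+1}^\ast - \y_t^\ast}$. The gradient bound $\lrnorm{\nabla\Phi(\cdot)} \le G$ from Assumption \ref{assumption_basic} then caps each gradient norm by $G$, which produces the claimed factor $2G$ and completes the estimate.

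I do not expect a genuine obstacle here, since the argument is a short direct computation; the only points that need care are choosing to expand the convexity inequality at $\nabla\Phi(\y_{t+1}^\ast)$ (rather than $\nabla\Phi(\y_t^\ast)$) so the inequality points in the direction that matches the sign of the difference we are bounding, and interpreting the gradient-norm assumption in the appropriate dual norm when $\lrnorm{\cdot}$ is not Euclidean so that the H\"older step is legitimate.
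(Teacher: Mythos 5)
Your proposal is correct and follows essentially the same route as the paper: the paper invokes the three-point identity to reach $B_\Phi(\y_{t+1}^\ast, \x) - B_{\Phi}(\y_t^\ast, \x) \le \lrangle{\nabla \Phi(\y_{t+1}^\ast) - \nabla \Phi(\x),\, \y_{t+1}^\ast - \y_t^\ast}$ by discarding the nonnegative term $B_{\Phi}(\y_t^\ast, \y_{t+1}^\ast)$, which is exactly the convexity inequality you derive by hand, and then finishes with the same Cauchy--Schwarz, triangle-inequality, and gradient-bound steps. Your remark about interpreting the gradient bound in the dual norm is a valid refinement that the paper itself glosses over, but it does not change the argument.
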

\begin{proof}

According to the third-point identity of the Bregman divergence, we have
\begin{align}
\nonumber
&  B_\Phi(\y_{t+1}^\ast, \x) - B_{\Phi}(\y_t^\ast, \x) \\ \nonumber
= & \lrangle{\nabla \Phi(\y_{t+1}^\ast) - \nabla \Phi(\x), \y_{t+1}^\ast - \y_t^\ast} - B_{\Phi}(\y_t^\ast, \y_{t+1}^\ast) \\ \nonumber
\refabovecir{\le}{\textcircled{1}} &  \lrangle{\nabla \Phi(\y_{t+1}^\ast) - \nabla \Phi(\x), \y_{t+1}^\ast - \y_t^\ast} \\ \nonumber
\le & \lrnorm{\nabla \Phi(\y_{t+1}^\ast) - \nabla \Phi(\x)} \lrnorm{ \y_{t+1}^\ast - \y_t^\ast} \\ \nonumber
\le & \lrincir{\lrnorm{\nabla \Phi(\y_{t+1}^\ast)} + \lrnorm{ \nabla \Phi(\x)}} \lrnorm{ \y_{t+1}^\ast - \y_t^\ast} \\ \label{equa_theorem_upper_bound_oa_temp2}
\le & 2G \lrnorm{\y_{t+1}^\ast - \y_t^\ast}.
\end{align} $\textcircled{1}$ holds because $B_{\Phi}(\u,\v) \ge 0$ holds for any vectors $\u$ and $\v$. It completes the proof.
\end{proof}

\begin{Lemma}
\label{lemma_distance_between_x_oa}
Given $\x_{t-1}\in\Xcal$ and $\hat{\g}_t$, if $\x_t = \argmin_{\x \in \Xcal} \lrangle{\hat{\g}_t, \x - \x_{t-1}} + \frac{1}{\gamma}B_{\Phi}(\x, \x_{t-1})$, we have
\begin{align}
\nonumber
\lrnorm{\x_t - \x_{t-1}} \le \frac{2 G \gamma}{\mu}.
\end{align}
\end{Lemma}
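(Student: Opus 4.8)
The plan is to invoke Lemma \ref{lemma_mirror_descent_update_rule} with a self-referential comparison point. I would instantiate that lemma with $\g = \hat{\g}_t$, $\u_t = \x_{t-1}$, $\u_{t+1} = \x_t$, and $\lambda = \gamma$, which is exactly the update rule defining $\x_t$ in the hypothesis. The crucial move is to select the free point as $\u^\ast = \x_{t-1}$ rather than as some external optimum. With this choice the leading Bregman term vanishes, since $B_{\Phi}(\u^\ast, \u_t) = B_{\Phi}(\x_{t-1}, \x_{t-1}) = 0$, and the conclusion of Lemma \ref{lemma_mirror_descent_update_rule} collapses to
\begin{align}
\nonumber
\lrangle{\hat{\g}_t, \x_t - \x_{t-1}} \le -\frac{1}{\gamma}\lrincir{ B_{\Phi}(\x_{t-1}, \x_t) + B_{\Phi}(\x_t, \x_{t-1}) }.
\end{align}

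Next I would discard the nonnegative term $B_{\Phi}(\x_{t-1}, \x_t) \ge 0$ and apply the $\mu$-strong convexity of $\Phi$ from Assumption \ref{assumption_basic} to the surviving term, giving $B_{\Phi}(\x_t, \x_{t-1}) \ge \frac{\mu}{2}\lrnorm{\x_t - \x_{t-1}}^2$. Substituting yields the one-sided estimate $\lrangle{\hat{\g}_t, \x_t - \x_{t-1}} \le -\frac{\mu}{2\gamma}\lrnorm{\x_t - \x_{t-1}}^2$, or equivalently $\frac{\mu}{2\gamma}\lrnorm{\x_t - \x_{t-1}}^2 \le \lrangle{\hat{\g}_t, \x_{t-1} - \x_t}$. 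I would then bound the right-hand inner product by Cauchy--Schwarz, $\lrangle{\hat{\g}_t, \x_{t-1} - \x_t} \le \lrnorm{\hat{\g}_t}\lrnorm{\x_t - \x_{t-1}}$, invoke the gradient bound $\lrnorm{\hat{\g}_t} \le G$ (valid because $\hat{\g}_t \in \nabla f_t(\x_{t-1})$) from Assumption \ref{assumption_basic}, and cancel one factor of $\lrnorm{\x_t - \x_{t-1}}$; the degenerate case $\x_t = \x_{t-1}$ satisfies the claim trivially. Rearranging then gives $\lrnorm{\x_t - \x_{t-1}} \le \frac{2G\gamma}{\mu}$, as required.

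The step I expect to be least routine is the choice $\u^\ast = \x_{t-1}$, which repurposes the optimality certificate of Lemma \ref{lemma_mirror_descent_update_rule} as a self-bounding statement about a single iterate rather than as a regret comparison against a benchmark; once this substitution is made, the remainder is mechanical. A secondary technical point worth flagging is that under a general norm $\lrnorm{\cdot}$ the Cauchy--Schwarz step strictly involves the dual norm of the gradient, so one must interpret Assumption \ref{assumption_basic}'s bound $\lrnorm{\nabla f_t(\x)} \le G$ as controlling that (dual) norm. Finally, the factor of $2$ appearing in the target bound originates precisely from retaining only one of the two Bregman terms; keeping both would produce the slightly sharper constant $G\gamma/\mu$, so discarding $B_{\Phi}(\x_{t-1}, \x_t)$ is the deliberate choice that matches the stated inequality.
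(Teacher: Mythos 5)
Your proof is correct and follows essentially the same route as the paper: both arguments exploit the optimality of $\x_t$ to obtain $\lrangle{\hat{\g}_t, \x_t - \x_{t-1}} \le -\frac{\mu}{2\gamma}\lrnorm{\x_t - \x_{t-1}}^2$ via strong convexity, then finish with Cauchy--Schwarz, the gradient bound $G$, and cancellation of one factor of $\lrnorm{\x_t - \x_{t-1}}$. The only cosmetic difference is that the paper compares objective values directly (the objective at $\x_{t-1}$ is zero) rather than instantiating Lemma \ref{lemma_mirror_descent_update_rule} at $\u^\ast = \x_{t-1}$; the two reduce to the same intermediate inequality, and your side remark that retaining both Bregman terms would sharpen the constant to $G\gamma/\mu$ is accurate.
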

\begin{proof}

\begin{align}
\nonumber
\lrangle{\hat{\g}_t, \x_t - \x_{t-1}} + \frac{\mu}{2\gamma} \lrnorm{\x_t - \x_{t-1}}^2 \refabovecir{\le}{\textcircled{1}} & \lrangle{\hat{\g}_t, \x_t - \x_{t-1}} + \frac{1}{\gamma} B_{\Phi}(\x_t, \x_{t-1}) \refabovecir{\le}{\textcircled{2}}  0.
\end{align} $\textcircled{1}$ holds due to $\Phi$ is $\mu$-strongly convex, and $\textcircled{2}$ holds due to the optimality of $\x_t$. Thus, 
\begin{align}
\nonumber
\frac{\mu}{2\gamma} \lrnorm{\x_t - \x_{t-1}}^2 \le \lrangle{\hat{\g}_t, -\x_t + \x_{t-1}} \le \lrnorm{\hat{\g}_t} \lrnorm{ -\x_t + \x_{t-1}} \le G \lrnorm{ -\x_t + \x_{t-1}}.
\end{align} That is, 
\begin{align}
\nonumber
\lrnorm{\x_t - \x_{t-1}} \le \frac{2 G \gamma}{\mu}.
\end{align}
It completes the proof.
\end{proof}

\textbf{Proof to Theorem \ref{theorem_regret_oa_upper_bound}:}
\begin{proof}

\begin{align}
\nonumber
& f_t(\x_t) - f_t(\y_t^{\ast}) \\ \nonumber
= & f_t(\x_t) - f_t(\x_{t-1}) + f_t(\x_{t-1}) - f_t(\y_t^{\ast}) \\ \nonumber
\le & f_t(\x_t) - f_t(\x_{t-1}) +\lrangle{\hat{\g}_t, \x_{t-1} - \y_t^{\ast}} \\ \nonumber
= & f_t(\x_t) - f_t(\x_{t-1}) - \lrangle{\hat{\g}_t, \x_t - \x_{t-1}} +  \lrangle{\hat{\g}_t, \x_t - \y_t^{\ast}} \\ \nonumber
\refabovecir{\le}{\textcircled{1}} & \frac{L}{2}\lrnorm{\x_{t-1} - \x_t}^2 +  \lrangle{\hat{\g}_t, \x_t - \y_t^{\ast}} \\ \nonumber
\refabovecir{\le}{\textcircled{2}} & \frac{L}{2}\lrnorm{\x_{t-1} - \x_t}^2 + \frac{1}{\gamma}\lrincir{ B_{\Phi}(\y_t^\ast, \x_{t-1}) -  B_{\Phi}(\y_t^\ast, \x_t) - B_{\Phi}(\x_t, \x_{t-1}) } \\ \nonumber
\refabovecir{\le}{\textcircled{3}} & \frac{L\gamma-\mu}{2\gamma}\lrnorm{\x_{t-1} - \x_t}^2 + \frac{1}{\gamma}\lrincir{ B_{\Phi}(\y_t^\ast, \x_{t-1}) -  B_{\Phi}(\y_t^\ast, \x_t)} \\ \label{equa_theorem_upper_bound_oa_temp1}
\refabovecir{\le}{\textcircled{4}} & \frac{1}{\gamma}\lrincir{ B_{\Phi}(\y_t^\ast, \x_{t-1}) -  B_{\Phi}(\y_t^\ast, \x_t)}.
\end{align} $\textcircled{1}$ holds because $f_t$ has $L$-Lipschitz gradient. $\textcircled{2}$ holds due to Lemma \ref{lemma_mirror_descent_update_rule} by setting $\g = \hat{\g}_t$, $\u_t = \x_{t-1}$, $\u_{t+1} = \x_t$, $\u^\ast = \y_t^\ast$, and $\lambda = \gamma$. $\textcircled{3}$ holds because that $\Phi$ is $\mu$-strongly convex, that is, $B_{\Phi}(\x_t, \x_{t-1}) \ge \frac{\mu}{2}\lrnorm{\x_t - \x_{t-1}}^2$. $\textcircled{4}$ holds due to $\gamma \le \frac{\mu}{L}$.

Thus, we have
\begin{align}
\nonumber
&\sum_{t=1}^T \lrincir{f_t(\x_t) - f_t(\y_t^{\ast}) + \lrnorm{\x_t - \x_{t-1}}^{\sigma} } - \sum_{t=1}^T \lrnorm{\y_t^\ast - \y_{t-1}^\ast}^{\sigma}  \\ \nonumber 
\le &\sum_{t=1}^T \lrincir{f_t(\x_t) - f_t(\y_t^{\ast}) + \lrnorm{\x_t - \x_{t-1}}^{\sigma} } \\ \nonumber
\refabovecir{\le}{\textcircled{1}} & \sum_{t=1}^T\lrnorm{\x_t - \x_{t-1}}^{\sigma} + \frac{1}{\gamma}\sum_{t=1}^T\lrincir{ B_{\Phi}(\y_t^\ast, \x_{t-1}) -  B_{\Phi}(\y_t^\ast, \x_t)} \\ \nonumber
= & \sum_{t=1}^T\lrnorm{\x_t - \x_{t-1}}^{\sigma} + \frac{1}{\gamma}\lrincir{B_{\Phi}(\y_1^\ast, \x_0) - B_{\Phi}(\y_T^\ast, \x_T)} + \frac{1}{\gamma}\sum_{t=1}^{T-1}\lrincir{ B_{\Phi}(\y_{t+1}^\ast, \x_t) -  B_{\Phi}(\y_t^\ast, \x_t)} \\ \nonumber
\refabovecir{\le}{\textcircled{2}} & \sum_{t=1}^T\lrnorm{\x_t - \x_{t-1}}^{\sigma} + \frac{2G}{\gamma}\sum_{t=1}^{T-1}\lrnorm{\y_{t+1}^{\ast} - \y_t^{\ast}} + \frac{1}{\gamma}\lrincir{B_{\Phi}(\y_1^{\ast}, \x_0) - B_{\Phi}(\y_T^{\ast}, \x_T)} \\ \nonumber
\le & \sum_{t=1}^T\lrnorm{\x_t - \x_{t-1}}^{\sigma} + \frac{2G}{\gamma}\sum_{t=1}^{T-1}\lrnorm{\y_{t+1}^{\ast} - \y_t^{\ast}} + \frac{1}{\gamma} B_{\Phi}(\y_1^{\ast}, \x_0) \\ \nonumber
\le & \sum_{t=1}^T\lrnorm{\x_t - \x_{t-1}}^{\sigma} + \frac{2GD}{\gamma} + \frac{R^2}{\gamma} \\ \nonumber
\refabovecir{\le}{\textcircled{3}} & \lrincir{\frac{2 G }{\mu}}^{\sigma} \gamma^{\sigma} T + \frac{2GD + R^2}{\gamma}.
\end{align} $\textcircled{1}$ holds due to \eqref{equa_theorem_upper_bound_oa_temp1}. 
$\textcircled{2}$ holds due to
\begin{align}
\nonumber
B_\Phi(\y_{t+1}^\ast, \x_t) - B_{\Phi}(\y_t^\ast, \x_t) \le  2G \lrnorm{\y_{t+1}^\ast - \y_t^\ast}
\end{align} according to Lemma \ref{lemma_dynamic_regret_bound}. $\textcircled{3}$ holds due to Lemma \ref{lemma_distance_between_x_oa}.

Choose $\gamma = \min \left\{ \frac{\mu}{L}, T^{-\frac{1}{1+\sigma}} D^{\frac{1}{1+\sigma}} \right\}$. We have
\begin{align}
\nonumber
&\sum_{t=1}^T \lrincir{f_t(\x_t) - f_t(\y_t^{\ast}) + \lrnorm{\x_t - \x_{t-1}}^{\sigma} } - \sum_{t=1}^T \lrnorm{\y_t^\ast - \y_{t-1}^\ast}^{\sigma}  \\ \nonumber 
\le  & \lrincir{\frac{2 G }{\mu}}^{\sigma} T^{\frac{1}{\sigma+1}}D^{\frac{\sigma}{\sigma+1}} + \max\left\{  \frac{L(2GD + R^2)}{\mu}, T^{\frac{1}{\sigma+1}} \lrincir{2G D^{\frac{\sigma}{\sigma+1}} + R^2 D^{-\frac{1}{\sigma+1}} } \right\}\\ \nonumber
\lesssim & T^{\frac{1}{\sigma+1}}D^{\frac{\sigma}{\sigma+1}} + T^{\frac{1}{\sigma+1}} D^{-\frac{1}{\sigma+1}}.
\end{align} 

Since it holds for any seqence $\{f_t\}_{t=1}^T \in \Fcal^T$, we finally obtain
\begin{align}
\nonumber
\sup_{\{f_t\}_{t=1}^T \in \Fcal^T}\Rcal_D^{\textsc{MD-OA}}
\lesssim & T^{\frac{1}{\sigma+1}}D^{\frac{\sigma}{\sigma+1}} + T^{\frac{1}{\sigma+1}} D^{-\frac{1}{\sigma+1}}.
\end{align} 
It completes the proof.
\end{proof}

\textbf{Proof to Theorem \ref{theorem_regret_oco_upper_bound}:} 
\begin{proof}

\begin{align}
\nonumber
& f_t(\x_t) -f_t(\y_t^\ast) + \lrnorm{\x_t - \x_{t+1}}^{\sigma} - \lrnorm{\y_t^\ast - \y_{t+1}^\ast}^{\sigma} \\ \nonumber 
\le & \lrangle{\bar{\g}_t, \x_t - \y_t^\ast } + \lrnorm{\x_t - \x_{t+1}}^{\sigma} \\ \nonumber
= & \lrangle{\bar{\g}_t, \x_t - \x_{t+1}} + \lrangle{\bar{\g}_t, \x_{t+1} - \y_t^\ast } + \lrnorm{\x_t - \x_{t+1}}^{\sigma}\\ \nonumber
\refabovecir{\le}{\textcircled{1}} & \lrangle{\bar{\g}_t, \x_t - \x_{t+1}} + \frac{1}{\eta}\lrincir{ B_{\Phi}(\y_t^\ast, \x_t) -  B_{\Phi}(\y_t^\ast, \x_{t+1}) - B_{\Phi}(\x_{t+1}, \x_t) } + \lrnorm{\x_t - \x_{t+1}}^{\sigma}\\ \nonumber
\refabovecir{\le}{\textcircled{2}} & \lrangle{\bar{\g}_t, \x_t - \x_{t+1}}  - \frac{\mu}{2\eta} \lrnorm{\x_{t+1} - \x_t}^2 + \frac{1}{\eta}\lrincir{ B_{\Phi}(\y_t^\ast, \x_t) -  B_{\Phi}(\y_t^\ast, \x_{t+1}) }  + \lrnorm{\x_t - \x_{t+1}}^{\sigma} \\ \nonumber
\refabovecir{\le}{\textcircled{3}} & \frac{\eta}{\mu}\lrnorm{\bar{\g}_t}^2 + \lrincir{ -\frac{\mu}{4\eta}\lrnorm{\x_{t+1}-\x_t}^2 + \lrnorm{\x_{t+1}-\x_t}^\sigma } + \frac{1}{\eta}\lrincir{ B_{\Phi}(\y_t^\ast, \x_t) -  B_{\Phi}(\y_t^\ast, \x_{t+1}) } \\ \nonumber
\le & \frac{\eta G^2}{\mu} + \lrincir{ -\lrincir{\frac{\sigma}{2}}^{\frac{2}{2-\sigma}}\lrincir{\frac{4\eta}{\mu}}^{\frac{\sigma}{2-\sigma}} + \lrincir{\frac{\sigma}{2}}^{\frac{\sigma}{2-\sigma}}\lrincir{\frac{4\eta}{\mu}}^{\frac{\sigma}{2-\sigma}} } + \frac{1}{\eta}\lrincir{ B_{\Phi}(\y_t^\ast, \x_t) -  B_{\Phi}(\y_t^\ast, \x_{t+1}) } \\ \nonumber
\le & \frac{\eta G^2}{\mu} + \lrincir{\frac{\sigma}{2}}^{\frac{\sigma}{2-\sigma}}\lrincir{\frac{4\eta}{\mu}}  + \frac{1}{\eta}\lrincir{ B_{\Phi}(\y_t^\ast, \x_t) -  B_{\Phi}(\y_t^\ast, \x_{t+1}) }.
\end{align} $\textcircled{1}$ holds due to Lemma \ref{lemma_mirror_descent_update_rule} by setting $\g = \bar{\g}_t$, $\u_t = \x_t$, $\u_{t+1} = \x_{t+1}$, $\u^\ast = \y_t^\ast$, and $\lambda = \eta$. $\textcircled{2}$ holds due to $\Phi$ is $\mu$-strongly convex.  $\textcircled{3}$  holds because $\lrangle{\u,\v} \le \frac{a}{2} \lrnorm{\u}^2 + \frac{1}{2a}\lrnorm{\v}^2$ holds for any $\u$, $\v$, and $a>0$. The last inequality holds due to $\eta\le\frac{\mu}{4}$ and $1\le \sigma \le 2$.

Telescoping it over $t$, we have
\begin{align}
\nonumber
& \sum_{t=1}^T \lrincir{f_t(\x_t) -f_t(\y_t^\ast) } + \sum_{t=1}^{T-1} \lrincir{ \lrnorm{\x_t - \x_{t+1}}^{\sigma} - \lrnorm{\y_t^\ast - \y_{t+1}^\ast}^{\sigma} } \\ \nonumber
\le & \frac{ T \eta G^2}{\mu} + \frac{1}{\eta} \sum_{t=1}^T \lrincir{ B_{\Phi}(\y_t^\ast, \x_t) -  B_{\Phi}(\y_t^\ast, \x_{t+1}) } + \lrincir{\frac{\sigma}{2}}^{\frac{\sigma}{2-\sigma}}\lrincir{\frac{4\eta}{\mu}} \\ \nonumber 
= & \frac{ T \eta G^2}{\mu} + \frac{1}{\eta} \lrincir{\sum_{t=2}^T \lrincir{ B_{\Phi}(\y_t^\ast, \x_t) -  B_{\Phi}(\y_{t-1}^\ast, \x_t) } } + \frac{1}{\eta} \lrincir{ B_{\Phi}(\y_1^\ast, \x_1) - B_{\Phi}(\y_T^\ast, \x_{T+1}) }\\ \nonumber
&  + \lrincir{\frac{\sigma}{2}}^{\frac{\sigma}{2-\sigma}}\lrincir{\frac{4\eta}{\mu}} \\ \nonumber
\le & \frac{ T \eta G^2}{\mu} + \frac{1}{\eta} \lrincir{\sum_{t=2}^T \lrincir{ B_{\Phi}(\y_t^\ast, \x_t) -  B_{\Phi}(\y_{t-1}^\ast, \x_t) } }  + \frac{1}{\eta} B_{\Phi}(\y_1^\ast, \x_1)  + \lrincir{\frac{\sigma}{2}}^{\frac{\sigma}{2-\sigma}}\lrincir{\frac{4\eta}{\mu}} \\ \nonumber
\refabovecir{\le}{\textcircled{1}} & \frac{ T \eta G^2}{\mu} + \frac{2G}{\eta}\sum_{t=1}^{T-1}\lrnorm{\y_{t+1}^\ast - \y_t^\ast}    + \frac{1}{\eta} B_{\Phi}(\y_1^\ast, \x_1)  + \lrincir{\frac{\sigma}{2}}^{\frac{\sigma}{2-\sigma}}\lrincir{\frac{4\eta}{\mu}} \\ \nonumber
\le & \frac{ T \eta G^2}{\mu} + \frac{2GD}{\eta}  + \frac{R^2}{\eta}  + \lrincir{\frac{\sigma}{2}}^{\frac{\sigma}{2-\sigma}}\lrincir{\frac{4\eta}{\mu}} \\ \nonumber
\lesssim  & \sqrt{TD}  + \sqrt{T}.
\end{align} $\textcircled{1}$ holds due to 
\begin{align}
\nonumber
B_\Phi(\y_{t+1}^\ast, \x_{t+1}) - B_{\Phi}(\y_t^\ast, \x_{t+1}) \le  2G \lrnorm{\y_{t+1}^\ast - \y_t^\ast}
\end{align} according to Lemma \ref{lemma_dynamic_regret_bound}. The last inequality holds by setting $ \eta = \min\left \{\sqrt{\frac{D+G}{T}}, \frac{\mu}{4} \right \}$.

Since it holds for any seqence of $f_t \in \Fcal$, we finally obtain
\begin{align}
\nonumber
\sup_{\{f_t\}_{t=1}^T \in \Fcal^T}\Rcal_D^{\textsc{MD-OCO}} \lesssim  \sqrt{TD}  + \sqrt{T}.
\end{align} 
It completes the proof.
\end{proof}

\textbf{Proof to Theorem \ref{theorem_lower_bound_oco}:}

\begin{proof}
This proof is inspired by \cite{Zhao:2018wx}, but our new analysis generalizes \cite{Zhao:2018wx} to the case of switching cost. 

Construct the function $f_t(\x_t) = \lrangle{\v_t, \x_t}$ for any $t\in[T]$. Here, $\v_t\in \{1, -1\}^d$, and every element $\v_t(j)$ with $j\in[d]$ is a random variable, which is sampled from a Rademacher distribution independently. For any online method $A\in\Acal$, its regret is bounded as follows.
\begin{align}
\nonumber
&\sup_{\{f_t\}_{t=1}^T} \Rcal_D^A \ge \Rcal_D^A  \\ \nonumber
= & \EE_{\v_{1:T}} \sum_{t=1}^T f_t(\x_t) + \sum_{t=1}^T \lrnorm{\x_t - \x_{t-1}}^{\sigma} - \EE_{\v_{1:T}} \min_{\{\y_t\}_{t=1}^T\in \Lcal_D^T}  \lrincir{\sum_{t=1}^T f_t(\y_t) + \sum_{t=1}^T \lrnorm{\y_t - \y_{t-1}}^{\sigma}} \\ \nonumber
= & \EE_{\v_{1:T}}\lrincir{ \sum_{t=1}^T f_t(\x_t) +  \sum_{t=1}^T \lrnorm{\x_t - \x_{t-1}}^{\sigma}} + \EE_{\v_{1:T}}  \max_{\{\y_t\}_{t=1}^T\in \Lcal_D^T}  \lrincir{ -\sum_{t=1}^T f_t(\y_t) -\sum_{t=1}^T \lrnorm{\y_t - \y_{t-1}}^{\sigma}  }    \\ \nonumber
= & \EE_{\v_{1:T}}\max_{\{\y_t\}_{t=1}^T\in \Lcal_D^T}  \sum_{t=1}^T \lrincir{ f_t(\x_t) -  f_t(\y_t) - \lrnorm{\y_t - \y_{t-1}}^{\sigma} } +  \EE_{\v_{1:T}}  \sum_{t=1}^T \lrnorm{\x_t - \x_{t-1}}^{\sigma}  \\ \label{equa_lower_bound_temp1}
= & \EE_{\v_{1:T}} \max_{\{\y_t\}_{t=1}^T\in \Lcal_D^T}  \sum_{t=1}^T \lrincir{ \lrangle{\v_t, \x_t - \y_t} - \lrnorm{\y_t - \y_{t-1}}^{\sigma} } +  \EE_{\v_{1:T}}  \sum_{t=1}^T \lrnorm{\x_t - \x_{t-1}}^{\sigma}.
\end{align}

For any optimal sequence of $\{\y_t^\ast\}_{t=1}^T$,
\begin{align}
\nonumber
\EE_{\v_t} \lrangle{\v_t, \x_{t-1}-\y_{t-1}^\ast} = \lrangle{\EE_{\v_t}\v_t, \x_{t-1}-\y_{t-1}^\ast} = \lrangle{\0, \x_{t-1}-\y_{t-1}^\ast} = 0.
\end{align} Thus, for any optimal sequence of $\{\y_t^\ast\}_{t=1}^T$, we have
\begin{align}
\nonumber
& \EE_{\v_{1:T}} \max_{\{\y_t\}_{t=1}^T\in \Lcal_D^T}  \sum_{t=1}^T \lrincir{ \lrangle{\v_t, \x_t - \y_t} -\lrnorm{\y_t - \y_{t-1}}^{\sigma} } \\ \nonumber
= & \EE_{\v_{1:T}} \lrincir{ \sum_{t=1}^T \lrangle{\v_t, \x_t - \y_t^\ast} -\sum_{t=1}^T \lrnorm{\y_t^\ast - \y_{t-1}^\ast}^{\sigma} } \\ \nonumber
= & \EE_{\v_{1:T}} \sum_{t=1}^T \lrangle{\v_t, \x_t-\x_{t-1}+\y_{t-1}^\ast - \y_t^\ast} - \EE_{\v_{1:T}} \sum_{t=1}^T \lrnorm{\y_t - \y_{t-1}^\ast}^{\sigma} \\ \nonumber
= & \EE_{\v_{1:T}} \sum_{t=1}^T \lrangle{\v_t, \x_t-\x_{t-1}} + \EE_{\v_{1:T}} \lrincir{ \sum_{t=1}^T \lrangle{\v_t, \y_{t-1}^\ast - \y_t^\ast}  - \sum_{t=1}^T \lrnorm{\y_t - \y_{t-1}^\ast}^{\sigma}   } \\ \nonumber
= & \EE_{\v_{1:T}} \sum_{t=1}^T \lrangle{\v_t, \x_t-\x_{t-1}} + \EE_{\v_{1:T}} \max_{\{\y_t\}_{t=1}^T\in \Lcal_D^T} \lrincir{ \sum_{t=1}^T \lrangle{\v_t, \y_{t-1} - \y_t}  - \sum_{t=1}^T \lrnorm{\y_t - \y_{t-1}}^{\sigma}   } \\ \nonumber
\end{align}

Substituting it into \eqref{equa_lower_bound_temp1}, we have
\begin{align}
\nonumber
&\sup_{\{f_t\}_{t=1}^T} \Rcal_D^A  \\ \nonumber
\ge & \EE_{\v_{1:T}} \lrincir{ \sum_{t=1}^T \lrangle{\v_t, \x_t-\x_{t-1}}  + \sum_{t=1}^T \lrnorm{\x_t - \x_{t-1}}^{\sigma} } \\ \nonumber 
& + \EE_{\v_{1:T}} \max_{\{\y_t\}_{t=1}^T\in \Lcal_D^T}  \lrincir{ \sum_{t=1}^T \lrangle{\v_t, \y_{t-1} - \y_t} - \sum_{t=1}^T \lrnorm{\y_t - \y_{t-1}}^{\sigma}  } \\ \nonumber
\refabovecir{\ge}{\textcircled{1}} & \EE_{\v_{1:T}} \max_{\{\y_t\}_{t=1}^T\in \Lcal_D^T}  \lrincir{ \sum_{t=1}^T \lrangle{\v_t, \y_{t-1} - \y_t} - \sum_{t=1}^T \lrnorm{\y_t - \y_{t-1}}^{\sigma}  } \\ \nonumber
\ge & \EE_{\v_{1:T}} \max_{\{\y_t\}_{t=1}^T\in \Lcal_D^T}  \sum_{t=1}^T  \lrangle{\v_t, \y_{t-1} - \y_t} - \max_{\{\y_t\}_{t=1}^T\in \Lcal_D^T} \sum_{t=1}^T \lrnorm{\y_t - \y_{t-1}}^{\sigma}   \\ \nonumber
\refabovecir{\ge}{\textcircled{2}}  &  \EE_{\v_{1:T}} \max_{\{\y_t\}_{t=1}^T\in \Lcal_D^T}  \sum_{t=1}^T  \lrangle{\v_t, \y_{t-1} - \y_t} -D^{\sigma} \\ \nonumber
\refabovecir{=}{\textcircled{3}} &  \EE_{\v_{1:T}} \max_{\{\y_t\}_{t=1}^T\in \Lcal_D^T}  \sum_{t=1}^T  \lrangle{\v_t, - \y_t} -D^{\sigma} \\ \nonumber
\refabovecir{=}{\textcircled{4}} &  \EE_{\v_{1:T}} \max_{\{\y_t\}_{t=1}^T\in \Lcal_D^T}  \sum_{t=1}^T  \lrangle{\v_t, \y_t} -D^{\sigma}. 
\end{align} $\textcircled{1}$ holds due to
\begin{align}
\nonumber
\EE_{\v_{t}} \lrincir{ \lrangle{\v_t, \x_t-\x_{t-1}}  + \lrnorm{\x_t - \x_{t-1}}^{\sigma} } = \lrangle{\EE_{\v_{t}}\v_t, \x_t-\x_{t-1}}  + \lrnorm{\x_t - \x_{t-1}}^{\sigma} = \lrnorm{\x_t - \x_{t-1}}^{\sigma}  \ge 0.
\end{align} $\textcircled{2}$ holds because that, for any sequence $\{\y_t\}_{t=1}^T$, $\sum_{t=1}^T \lrnorm{\y_t - \y_{t-1}} \le D$. Thus,
\begin{align}
\nonumber
\max_{\{\y_t\}_{t=1}^T\in \Lcal_D^T} \sum_{t=1}^T \lrnorm{\y_t - \y_{t-1}}^{\sigma} \le \max_{\{\y_t\}_{t=1}^T\in \Lcal_D^T} \lrincir{\sum_{t=1}^T \lrnorm{\y_t - \y_{t-1}}}^{\sigma} \le D^{\sigma}. 
\end{align} $\textcircled{3}$ holds because that, for any vector $\y_{t-1}$, 
\begin{align}
\nonumber
\EE_{\v_t} \lrangle{\v_t, \y_{t-1}} =  \lrangle{\EE_{\v_t}\v_t, \y_{t-1}} = \lrangle{\0, \y_{t-1}} = 0.
\end{align} $\textcircled{4}$ holds because that the domain of $\v_t$ is symmetric.

Furthermore, we construct  a sequence $\{\y_t\}_{t=1}^T$ as follows.

\begin{enumerate}
    \item Evenly split $\{\y_t\}_{t=1}^{T}$ into two subsets: $\{\y_t\}_{t=1}^{T_1}$ and $\{\y_{T_1+t}\}_{t=1}^{T_2}$. Here, $T_1 = T_2 = \frac{T}{2}$.
    \item After that, evenly split $\{\y_t\}_{t=1}^{T_1}$ into $N := \min\left\{ \frac{D}{R},T_1 \right\} $ subsets, that is, $\{\y_t\}_{t=1}^{\frac{T_1}{N}}$, $\{\y_t\}_{t=\frac{T_1}{N} + 1}^{\frac{2T_1}{N}}$, $\{\y_t\}_{t=\frac{2T_1}{N} + 1}^{\frac{3T_1}{N}}$, ..., $\{\y_t\}_{t=\frac{(N-1)T_1}{N}+1}^{T_1}$. 
    \item For the $i$-th subset of the sequence $\{\y_t\}_{t=1}^{T_1}$, let the values in it be same, and denote it by $\u_i$ with $\lrnorm{\u_i} \le \frac{R}{2}$. For the whole sequence  $\{\y_{T_1+t}\}_{t=1}^{T_2}$, let all the values be same, namely $\u_N$.
    \item For the sequence of $\{\y_t\}_{t=1}^{T_1}$, elements in different subsets are different such that $\|\u_{i+1} - \u_i\| \leq \|\u_{i+1}\| + \|\u_i\| \leq R$. Thus, 
    \begin{align}
    \nonumber
    & \sum_{t=1}^{T-1} \|\y_{t+1} - \y_{t}\| = \sum_{t=1}^{T_1-1} \|\y_{t+1} - \y_{t}\| + \sum_{t=T_1}^{T} \|\y_{t+1} - \y_{t}\| \\ \nonumber
    = & \sum_{i=1}^{N-1} \|\u_{i+1} - \u_i\| + 0\\ \nonumber
    \le & (N-1)R \\ \nonumber
    \le & D.
    \end{align} The last inequality holds due to $(N-1)R \le D$.
     It implies that $\{\y_t\}_{t=1}^{T}$ under our construction is feasible.
\end{enumerate}

Then, we have
\begin{align}
\nonumber
& \EE_{\v_{1:T}} \max_{\{\y_t\}_{t=1}^T\in \Lcal_D^T}  \sum_{t=1}^T  \lrangle{\v_t,  \y_t} = \EE_{\v_{1:T}} \max_{\{\y_t\}_{t=1}^T\in \Lcal_D^T}  \lrincir{ \sum_{t=1}^{T_1}  \lrangle{\v_t,  \y_t} + \sum_{t=T_1+1}^T  \lrangle{\v_t,  \y_t} } \\ \nonumber
= & \EE_{\v_{1:T}}  \sum_{i=1}^N  \max_{\lrnorm{\u_i}\le \frac{R}{2}} \lrangle{\sum_{t=1 + \frac{T(i-1)}{N}}^{\frac{Ti}{N}}\v_t, \u_i} + \EE_{\v_{1:T}} \max_{\lrnorm{\u_N}\le \frac{R}{2}} \lrangle{\sum_{t=T_1+1}^{T}\v_t, \u_N} \\ \nonumber
\refabovecir{=}{\textcircled{1}} & \frac{R}{2}\EE_{\v_{1:T}} \sum_{i=1}^N   \lrnorm{ \sum_{t=1 + \frac{T(i-1)}{N}}^{\frac{Ti}{N}}\v_t}  + \frac{R}{2} \EE_{\v_{1:T}}\lrnorm{ \sum_{t=T_1+1}^{T}\v_t }  \\ \nonumber
\refabovecir{\ge}{\textcircled{2}} & \frac{R}{2\sqrt{d}}\EE_{\v_{1:T}} \sum_{i=1}^N  \sum_{j=1}^d \left | \sum_{t=1 + \frac{T(i-1)}{N}}^{\frac{Ti}{N}}\v_t(j) \right | + \frac{R}{2\sqrt{d}} \EE_{\v_{1:T}}\sum_{j=1}^d \left | \sum_{t=T_1+1}^{T}\v_t(j) \right | \\ \nonumber
\refabovecir{=}{\textcircled{3}} & \frac{\sqrt{d}NR}{2} \cdot  \Omegacal{\sqrt{\frac{T}{N}}}  + \frac{R\sqrt{d}}{2} \cdot \Omega\lrincir{\sqrt{\frac{T}{2}}} \\ \nonumber
= & \Omegacal{\sqrt{R}\sqrt{TNR} + \sqrt{T}} \\ \nonumber
\refabovecir{=}{\textcircled{4}} & \Omegacal{\sqrt{TD}+ \sqrt{T}}.
\end{align} $\textcircled{1}$ holds because that the maximum is obtained at the boundary of the domain. $\textcircled{2}$ holds because that, for any $\v\in\RR^d$, $\lrnorm{\v}_1 \le \sqrt{d} \lrnorm{\v}_2$. $\textcircled{3}$ holds due to a classic result \cite{Hazan2016Introduction}, that is,
\begin{align}
\nonumber
\EE_{\v_{1:T}} \left | \sum_{t=1 + \frac{T(i-1)}{N}}^{\frac{Ti}{N}}\v_t(j) \right | = \Omegacal{\sqrt{\frac{T}{N}}}.
\end{align} $\textcircled{4}$ holds due to $D - R \le NR \le D + R$, which implies that $NR \lesssim D$ holds for $D>0$.

Therefore, we obtain
\begin{align}
\nonumber
\sup_{\{f_t\}_{t=1}^T} \Rcal_D^A \ge \EE_{\v_{1:T}} \max_{\{\y_t\}_{t=1}^T\in \Xcal^T}  \sum_{t=1}^T  \lrangle{\v_t, \y_t} - D^{\sigma} = \Omegacal{\sqrt{TD} + \sqrt{T}}.
\end{align} The last equality holds because $D^{\sigma}$ is a constant, and it does not increase over $T$.

Since it holds for any online algorithm $A\in \Acal$, we finally have 
\begin{align}
\nonumber
\inf_{A\in\Acal} \sup_{\{f_{t}\}_{t=1}^T \in \Fcal^T}  = \Omegacal{\sqrt{TD} + \sqrt{T}}.
\end{align} It completes the proof.
\end{proof}

\end{document}